%% file: main.tex
  \documentclass[sigconf]{acmart}
\renewcommand\footnotetextcopyrightpermission[1]{} 
\usepackage[table,xcdraw]{xcolor}
\usepackage[utf8]{inputenc} 
\usepackage[T1]{fontenc}    
\usepackage{hyperref}       
\usepackage{amsfonts}       
\usepackage{microtype}      

\usepackage{amsmath,amsthm,amssymb}
\usepackage{bm}
\usepackage{siunitx}
\usepackage{nicefrac}

\usepackage{array,booktabs,threeparttable}
\usepackage{multirow,makecell}
\usepackage{tabularx}
\usepackage{stfloats}
\usepackage{balance}
\usepackage{caption}

\usepackage{graphicx}
\usepackage{epstopdf}
\usepackage[procnumbered,ruled,vlined,linesnumbered]{algorithm2e}

\usepackage{enumerate}

\usepackage{enumitem}

\usepackage{tcolorbox}

\input{symbol_def}

\author{Haoxin Sun}
\affiliation{%
\institution{Fudan University}
\city{Shanghai}
\country{China}}
\email{23110240089@m.fudan.edu.cn}

\author{Yiqing Lin}
\affiliation{%
\institution{ByteDance}
\city{Shanghai}
\country{China}}
\email{linyiqing@bytedance.com}

\author{Yajun Huang}
\affiliation{%
\institution{ByteDance}
\city{Shanghai}
\country{China}}
\email{huangyajun.gooday@bytedance.com}

\author{Chenhui Dong}
\affiliation{%
\institution{ByteDance}
\city{Shanghai}
\country{China}}
\email{dongchenhui.0711@bytedance.com}

\author{Mingjun Li}
\affiliation{%
\institution{ByteDance}
\city{Shanghai}
\country{China}}
\email{limingjun.tsinghua@bytedance.com}

\author{Zhongzhi Zhang}
\affiliation{%
\institution{Fudan University}
\city{Shanghai}
\country{China}}
\email{zhangzz@fudan.edu.cn}

\keywords{Graph Pretrain, Large Language Model, Graph Neural Network, Graph-Level Tasks}
\begin{document}

\title{GLIP: Graph and LLM Joint  Pretraining for Graph-Level Tasks }

\begin{abstract}
 
Graphs are widely used to model relational systems, with applications in domains such as social networks, finance, and biomedicine. Graph neural networks (GNNs) have become a mainstream approach for learning graph representations. With the rise of large language models (LLMs), recent studies have attempted to combine GNNs with LLMs. However, most existing works concentrate on node-level and edge-level tasks, while graph-level tasks, which require capturing more complex structural and feature information, remain relatively underexplored. Moreover, graph pretraining is a widely adopted strategy to alleviate the challenge of label scarcity. Most existing approaches are designed solely for GNNs such as GraphCL, leaving LLMs uninvolved in the process. To address these limitations, 
we propose GLIP, a Graph–LLM JoInt Pretraining framework for graph-level tasks. GLIP first performs graph augmentation to construct positive and negative pairs and introduces a multi-token selection strategy to identify patches informative in both structure and features.
It further leverages a diffusion-based projector to enrich them with contextual information, enabling GLIP to capture signals from both global and local perspectives. 
Finally, GLIP employs a joint objective that integrates the LLM’s semantic judgments with a contrastive alignment loss, ensuring consistent supervision at both the semantic and structural levels.After pretraining, GLIP is fine-tuned with limited labeled data for downstream tasks, and extensive experiments show that it outperforms state-of-the-art methods on graph-level classification and reasoning tasks.  Our source code is publicly available on \url{https://anonymous.4open.science/r/GLIP}. 

\end{abstract}
\maketitle

\section{Introduction}

Graphs are a fundamental data structure for modeling relational systems, with broad applications in diverse domains such as social networks~\cite{perozzi2014deepwalk,grover2016node2vec}, financial systems~\cite{dou2020enhancing,ma2021comprehensive}, and biomedical sciences~\cite{gilmer2017neural,stokes2020deep}.
Graph neural networks (GNNs)~\cite{kipf2016semi,velivckovic2017graph,hamilton2017inductive}, which leverage message passing and neighborhood aggregation mechanisms, have emerged as powerful frameworks for graph representation learning, achieving strong performance in downstream tasks such as node classification, graph classification, and link prediction~\cite{wu2020comprehensive,zhang2020deep}. While GNNs have achieved remarkable progress in graph-structured data representation, the rise of large language models (LLMs) has reshaped modern artificial intelligence, demonstrating impressive reasoning and generalization abilities across diverse domains~\cite{brown2020language,touvron2023llama,bai2023qwen,guo2025deepseek}.  In recent years, an increasing number of studies have attempted to integrate GNNs with LLMs, combining the structural encoding strengths of GNNs with the inference capacity of LLMs, leading to complementary effects that surpass either model alone~\cite{chai2023graphllm,chen2024llaga,tang2024graphgpt,zhang2024graphtranslator,wang2024llms}.

Although recent efforts have explored the integration of GNNs with LLMs, existing approaches still face several limitations.
In particular, prior studies have mainly concentrated on node-level or link-level tasks, while graph-level tasks such as classification and explanation remain relatively underexplored. However, these tasks are critical in many domains, such as detecting fraudulent groups with explanations in financial networks~\cite{dou2020enhancing,rossi2018deep}, as well as identifying toxic molecules and uncovering their underlying mechanisms in biomedical applications~\cite{gilmer2017neural,stokes2020deep}. In addition, most existing methods that combine graph encoders with LLMs rely heavily on supervised fine-tuning with abundant labeled data, which is costly and often impractical. While several label-free pretraining methods have been proposed~\cite{you2020graph,hou2022graphmae,velivckovic2018deep}, they are designed for GNN alone and do not involve LLM in the pretraining process. These limitations naturally raise a key question.  \textit{How can we design a joint pretraining strategy for graph encoders and LLMs that enables graph representations to be effectively utilized by LLMs for graph-level tasks under limited labeled data?}

Two challenges arise in establishing a joint pretraining strategy:\\
\textbf{(i) How can LLMs be involved in pretraining to provide semantic-level supervision for graph tokens?} Existing pretraining methods for graphs~\cite{you2020graph,hou2022graphmae,velivckovic2018deep} are designed exclusively for graph encoders. Although recent work~\cite{wang2024llms} leverages the principal components of LLM token embeddings to align node representations, the LLM’s reasoning and response capabilities are not leveraged in the pretraining process.\\
\textbf{(ii) How can we present graph information to LLMs at both global and local levels?} Although prior works have explored aligning graphs with LLMs~\cite{chen2024llaga,tang2024graphgpt,zhang2024graphtranslator,wang2024llms}, they primarily focus on node-level tasks. However, graph-level tasks require both global structural context and critical local substructures. Simple pooling or linear projection often loses such information, making it difficult for LLMs to reason effectively over graphs.

In this paper, we propose \textbf{GLIP} (\textbf{G}raph–\textbf{L}LM Jo\textbf{I}nt \textbf{P}retraining), a novel pretraining framework that enables graph encoders and LLMs to collaboratively participate in the pretraining process through a shared training objective. GLIP introduces a multi-token selection strategy that captures both global structural context and critical local substructures. Furthermore, it incorporates a diffusion-based projector that builds contextual connections among different local tokens, thereby enabling graph tokens to better elicit reasoning behaviors from LLMs. Moreover, GLIP employs a label-free, self-supervised pretraining objective that explicitly involves LLMs in the pretraining process, allowing the pretraining process, guided by the LLM’s semantic judgments, to organize graph representations according to their similarities and differences across diverse graph structures. Our contributions are summarized as follows:

 \begin{itemize}[leftmargin=*]
    \item We propose GLIP, the first joint pretraining framework that enables graph encoders and LLMs to collaboratively participate in the pretraining process through a shared training objective, and supports efficient fine-tuning with limited labeled data for downstream tasks.
    \item  We design a multi-token selection strategy to capture both global context and local substructures, and propose a diffusion-based projector that enriches contextual dependencies among local information.
    \item  We conduct comprehensive experiments on multiple datasets covering diverse domains and graph types, and demonstrate that GLIP consistently outperforms state-of-the-art baselines in graph-level classification and explanation tasks.
\end{itemize}

\begin{figure*}[t]
  \centering
  \includegraphics[width=.95\linewidth]{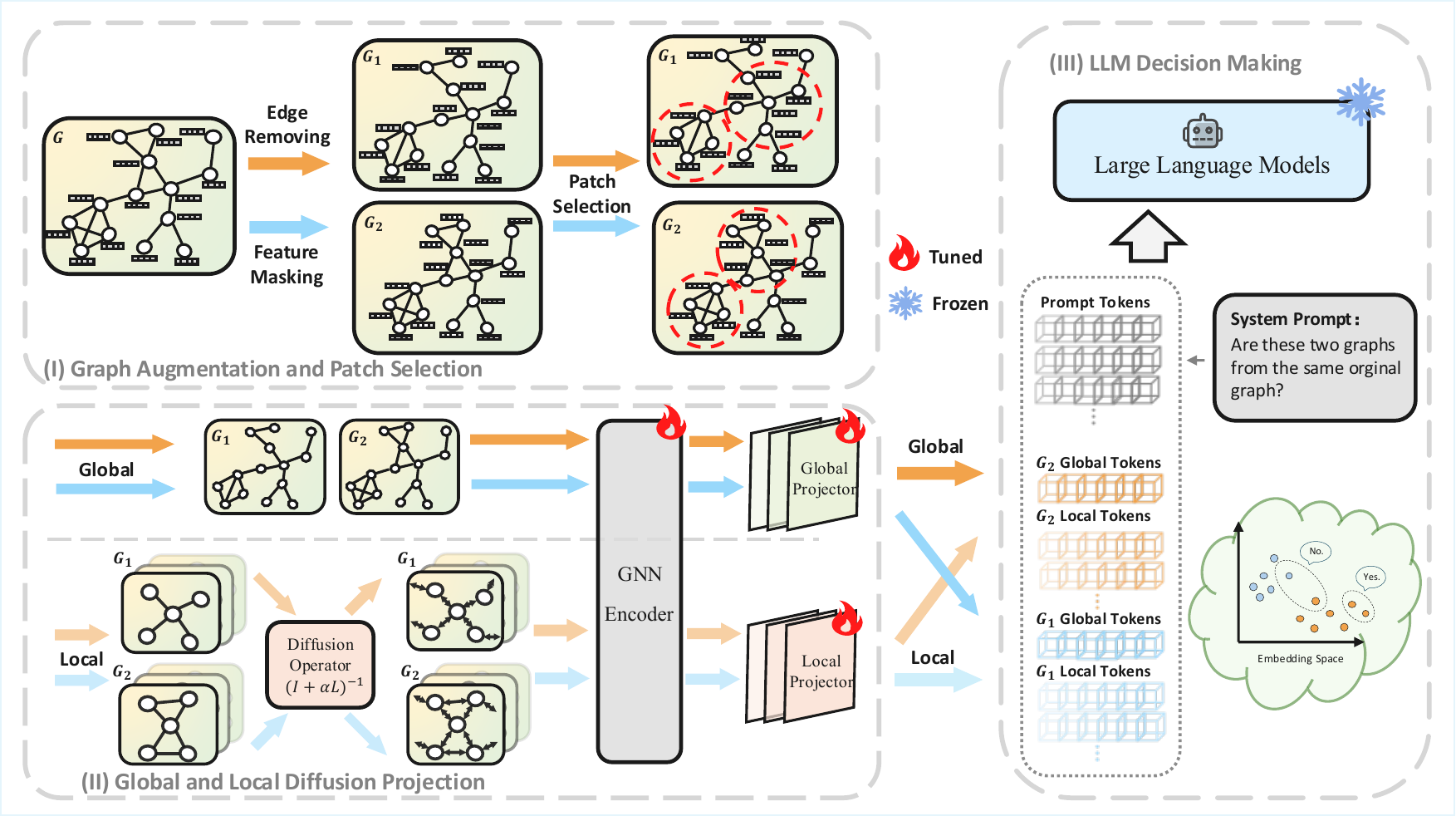}
  \vspace{-3mm}
  \caption{The overall framework of Glip.}
  \vspace{-3mm}
  \label{fig:overview}
\end{figure*}
  \vspace{-2mm}
\section{Related Work}
\noindent\textbf{ Self-supervised Pretraining Method for GNNs.}
Graph Neural Networks (GNNs) have become a central paradigm in graph machine learning, attracting significant attention in recent years~\cite{wu2020comprehensive,wu2022graph,zhou2020graph,xu2018powerful,velivckovic2023everything}. The core idea behind GNNs is to learn expressive graph representations through message passing and neighborhood aggregation. Several representative architectures have been proposed, including Graph Attention Networks (GAT)~\cite{velivckovic2017graph}, Graph Convolution Network (GCN)~\cite{zhang2020gcn},
and GraphSAGE~\cite{hamilton2017inductive}.

To reduce the reliance on labeled data and improve the generalization ability of GNNs, self-supervised learning has been widely explored for pretraining. Deep Graph Infomax (DGI)~\cite{velivckovic2018deep} maximizes mutual information between local and global representations. Contrastive learning methods such as GraphCL~\cite{you2020graph} construct positive and negative pairs through graph augmentations to enhance discriminability. More recently, SimGRACE~\cite{xia2022simgrace} achieves contrastive learning without explicit graph augmentations by leveraging dropout-induced perturbations within the encoder, while generative approaches like GraphMAE~\cite{hou2022graphmae} learn representations through reconstructing masked features or graph structures.

\noindent\textbf{Large Language Models for Graphs.} 
With the rapid advancement of Large Language Models (LLMs) and their strong generalization capabilities, leveraging LLMs to tackle transferability issues in graph machine learning has attracted growing interest~\cite{wu2025llms,guo2023gpt4graph,jin2024large,li2023survey,chen2024exploring}. Some studies convert graph structures into textual inputs for LLMs~\cite{chen2024exploring,wang2023can}, but such representations often yield suboptimal performance~\cite{huang2023can}. LLMs have also been adopted as enhancers~\cite{chen2023label,liu2023one}, typically generating synthetic data or node textual representations.  More recently, growing efforts have focused on employing LLMs directly as predictors. For instance,  LLaGA~\cite{chen2024llaga} presents an encoding method that translates graph data into LLM compatible sequences, while GraphGPT~\cite{tang2024graphgpt} integrates LLMs with graph structures via instruction tuning, enabling strong zero-shot generalization across graph tasks.

However, most of these methods rely on labeled data for supervised learning, which limits their applicability when labels are scarce. Some recent works have explored pretraining approaches such as contrastive learning for combining graphs and LLMs.  For example in~\cite{wang2024llms}, authors propose training a GNN to align its entity representations with the token embeddings of an LLM. Nevertheless, the LLM itself is not involved during the pretraining process. These limitations motivate us to design a joint pretraining framework that explicitly involves both GNNs and LLMs to capture graph semantics in a label-free manner.

\section{Methodology}

\subsection{Notation} 
Let $\mathcal{G} = \{\mathcal{V}, \mathcal{E}, \boldsymbol{X}\}$ denote a connected undirected graph, where $\mathcal{V} = \{v_1, v_2,...,v_N\}$ is the set of $N$ nodes, $\mathcal{E}=\{e_{ij}\}$ is the set of edges, and $\boldsymbol{X} \in \mathbb{R}^{n \times F}$ is node features. In what follows, $v_i$ and $i$ are used interchangeably to represent node $v_i$ if incurring no confusion. Let $\boldsymbol{A}$ be the adjacency matrix, and $\boldsymbol{D}$ be the diagonal degree matrix with entries $d_i = \boldsymbol{D}_{ii} = \sum_j \boldsymbol{A}_{ij}$. The graph Laplacian is defined as $\boldsymbol{L} = \boldsymbol{D} - \boldsymbol{A}$. For any node $i$, we denote its neighborhood as $N_i = \{ j : e_{ij} \in \mathcal{E} \}$.

  \vspace{-2mm}
\subsection{Overall Architecture}



The central goal of GLIP is to learn graph representations aligned with the embedding space of a pretrained LLM in a self-supervised manner, by jointly training a graph encoder and a language-model interface. Instead of updating the LLM itself, GLIP keeps the LLM frozen and uses it as a semantic decision interface, while optimizing a GNN-based encoder and projection modules that transform graphs into token sequences compatible with the LLM input space. In this setting, the LLM provides structured supervision, and the graph side is trained to produce representations that are well separated in the LLM embedding space.

GLIP follows a two-stage pipeline: (i) a self-supervised pretraining stage that aligns graph tokens in the LLM token space via a view-consistency objective, and (ii) a task-specific fine-tuning stage that adapts the pretrained graph encoder and projectors with limited labeled data for downstream tasks such as graph classification and reasoning. The pretraining stage reshapes the representation space by pulling semantically similar graphs closer and pushing dissimilar ones apart, which enables more sample-efficient learning of decision boundaries during fine-tuning.

As illustrated in Figure~\ref{fig:overview}, GLIP consists of three key components in the pretraining stage:
(1) \textbf{Graph augmentation and patch selection}. Given a graph $\mathcal{G}$, we apply augmentation operations such as edge removal or node feature masking to generate two correlated views, $\mathcal{G}_1$ and $\mathcal{G}_2$, which form a positive pair, while different original graphs are treated as negative pairs. For each graph, we design a patch selection strategy to extract several informative subgraphs. The entire graph is regarded as the global view, while each patch is considered as a local view, enabling the model to capture both global and local structural information.
(2) \textbf{Global projection and local diffusion projection}. For the global view, features are passed through a GNN-based encoder followed by a projector into the LLM embedding space. For the local views, we introduce a diffusion operator that enriches each subgraph with broader structural context before feeding them into the same GNN encoder and projector.
(3) \textbf{LLM-based semantic discrimination}. The projected graph tokens, together with prompt tokens, are fed into a frozen open-source LLM, which is trained to predict whether two token sequences originate from the same graph. This binary decision objective provides a semantic-level supervision signal that encourages graph tokens from the same graph to be close, while pushing apart those from different graphs in the LLM token space.

With this pretraining procedure, GLIP learns graph encoders and projectors that are well aligned with the LLM token space without requiring any task labels. In the subsequent fine-tuning stage, we initialize the model with the pretrained parameters and adapt the graph encoder and projectors using a small amount of labeled data, enabling the model to form effective decision boundaries for downstream graph-level tasks such as semi-supervised classification, few-shot learning, and reasoning.

 \subsection{  Patch Selection}
As discussed in the previous section, GLIP represents each graph using a global token and multiple local tokens to capture informative substructures. In this section, we introduce how to select local tokens under a limited token budget.

\subsubsection{Patch Selection Candidates}
Given a graph $\mathcal{G}$ with many nodes and edges, a straightforward approach is to treat the entire graph as a global view, obtaining a single graph token by pooling over all nodes.  
However, relying solely on the global view may overlook important local patterns that are crucial for understanding the graph. To address this, we introduce a patch selection framework. Specifically, we assume a collection of $t$ candidate patches (subgraphs) $\mathcal{C} = \{S_1, S_2, \dots, S_t\}$, such as one-hop neighborhoods around nodes.  

\subsubsection{Patch Selection Problem}
Each candidate patch contains meaningful local information. However, we cannot select all candidates due to computational cost, memory constraints, and the token budget for LLMs. Suppose the budget is $k$. Our goal is to select $k$ informative patches  that effectively capture both the structural and feature-level characteristics of the graph. Formally, the patch selection problem is defined as follows.

\begin{tcolorbox}[boxrule={1pt}]
\begin{problem}\label{Pr-opt}[Patch Selection Problem]
    Given a collection of $t$ candidate patches $\mathcal{C} = \{S_1, S_2, \dots, S_t\}$ from the graph $\mathcal{G}$, the goal is to select $k$ patches by solving:
    \[
    \begin{aligned}
    \max_{U \subseteq \mathcal{C}} \quad &\lambda f_{\mathrm{structure}}(U) +(1-\lambda) f_{\mathrm{feature}}(U) \\
    \text{s.t.} \quad &|U| = k.
    \end{aligned}
    \]
\end{problem}
\end{tcolorbox}

\begin{figure*}[t]
  \centering
  \includegraphics[width=.7\linewidth]{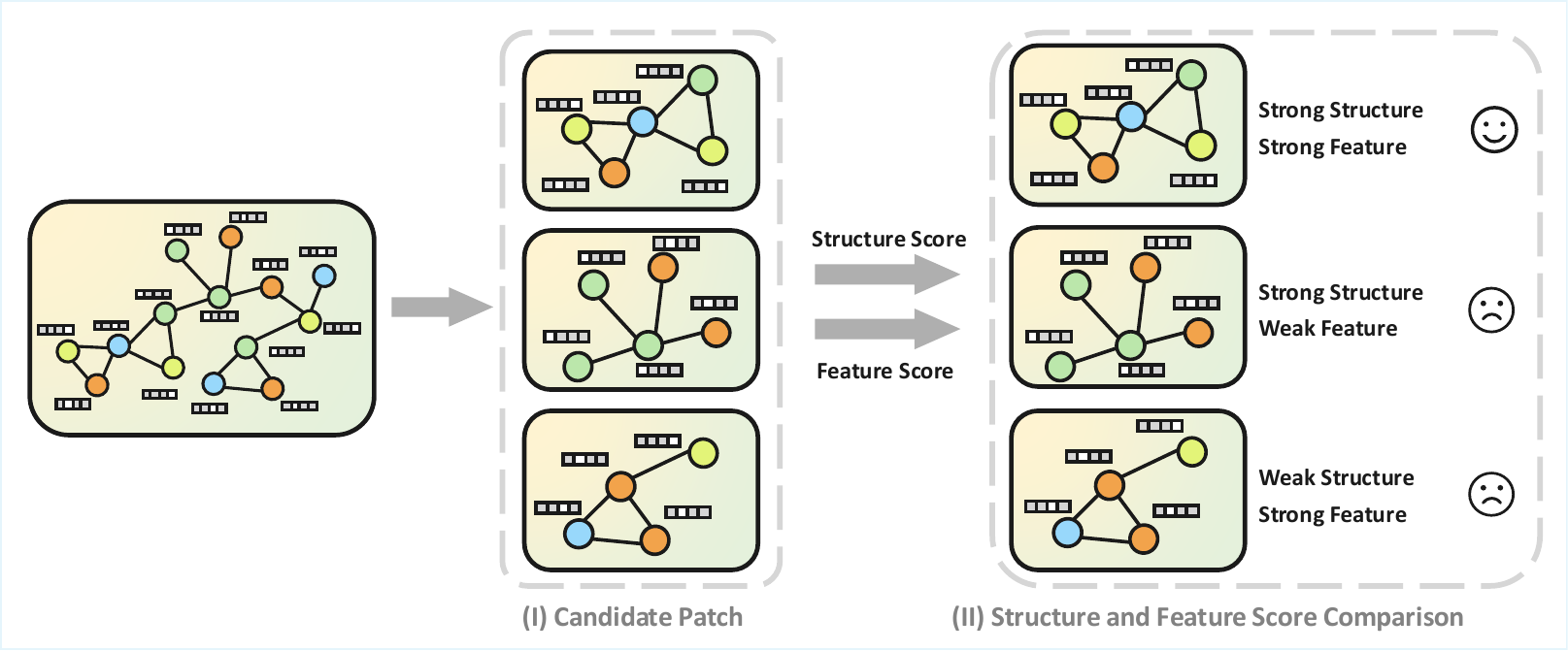}
  \vspace{-3mm}
  \caption{The  framework of patch selection.}
  \vspace{-3mm}
  \label{fig:patch-selection}
\end{figure*}

We use $f_{\text{structure}}(U)$ and $f_{\text{feature}}(U)$ to capture structural and feature-level objectives respectively, weighted by parameter $\lambda \in [0,1]$. The function $f_{\text{structure}}(U)$ measures structural coverage, encouraging the union  of selected patches to maximize the number of distinct nodes.  {Let $C_U = \bigcup_{S \in U} S$ denote the set of nodes covered by the selected patches. We define the structural coverage as $f_{\text{structure}}(U) = \frac{|C_U|}{N}$, where $N$ is the total number of nodes in the graph.}


Besides structural coverage, we also expect the union set $U$ to capture diverse feature information of nodes across the graph. To this end, we define $f_{\text{feature}}(U)$ to measure feature-level coverage. Each node $v$ is associated with a feature embedding vector  $x_v$. For each node $v$ in the graph, we compute its maximum cosine similarity to any node $u$ within the selected union $U$. The overall feature coverage is then given by the average of these maximum similarities across all nodes $f_{\text{feature}}(U) = \frac{1}{N} \sum_{v=1}^{N} \max_{u \in C_U} \cos(\xx_v, \xx_u),
$, where $x_v$ and $x_u$ denote the normalized feature embeddings of nodes $v$ and $u$, respectively.

Define  $F(U)$ as    $F(U) = \lambda f_{\mathrm{structure}}(U) + (1-\lambda) f_{\mathrm{feature}}(U).$ Then, Problem~\ref{Pr-opt} aims to find a set $U$ that maximizes $F(U)$. 

\subsubsection{Hardness and Properties of Problem~\ref{Pr-opt}}

Since we need to select $k$ patches out of $t$ candidates, Problem~\ref{Pr-opt} is inherently combinatorial. In fact, we now show that this problem is NP-hard.

\begin{theorem}\label{th-NP}
The Patch Selection Problem (Problem~\ref{Pr-opt}) is NP-hard.
\end{theorem}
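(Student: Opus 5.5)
We reduce from Maximum Coverage, which is NP-hard in its decision form: given a ground set, a family of subsets $\mathcal{S}=\{T_1,\dots,T_t\}$, an integer $k$, and a threshold $r$, decide whether some $k$ sets cover at least $r$ elements. The plan is to specialize Problem~\ref{Pr-opt} to $\lambda = 1$, so that the objective is exactly $F(U)=f_{\mathrm{structure}}(U)=|C_U|/N$, i.e., normalized node coverage. Any algorithm that solves Problem~\ref{Pr-opt} for all admissible $\lambda\in[0,1]$ therefore solves this special case. If one prefers $\lambda\in(0,1)$, we will instead make $f_{\mathrm{feature}}$ constant, for example by giving every node the same normalized feature vector, so that $f_{\mathrm{feature}}(U)=1$ for every nonempty $U$. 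Then $F(U)=\lambda |C_U|/N + (1-\lambda)$ is a strictly increasing affine function of $|C_U|$, and the reduction goes through for any $\lambda$ with $\lambda>0$.

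\emph{Instance construction.} Given a Maximum Coverage instance with ground set $\{1,\dots,n\}$ and sets $T_1,\dots,T_t$, we build a graph $\mathcal{G}$ whose nodes include one node per ground element. One subtlety is that the paper assumes $\mathcal{G}$ is connected and candidate patches are node subsets of it. Because candidate patches are allowed to be arbitrary subgraphs (the one-hop neighborhoods are only an example), we simply declare $S_i := T_i$ as node sets. To make the graph connected, we add a hub node $h$ adjacent to all element nodes, so that $N=n+1$. The hub lies in no patch, and it contributes to neither the coverage count nor any ambiguity. If patches must themselves be connected subgraphs, we will instead let each $S_i$ be $T_i$ together with a private connector node $c_i$ adjacent to every element of $T_i$. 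This adds exactly one extra node per chosen patch, so every $k$-selection has coverage $|\bigcup T_i| + k$, and the threshold shifts by the constant $k$.

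\emph{Correctness.} With the construction above, some $k$ sets cover at least $r$ elements if and only if the optimal value of Problem~\ref{Pr-opt} is at least $\lambda (r + \delta)/N + (1-\lambda)$, where $\delta\in\{0,k\}$ depends on the variant. The construction is polynomial in $n+t$, so a polynomial algorithm for Problem~\ref{Pr-opt} would decide Maximum Coverage, which proves the theorem.

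The main obstacle is being faithful to the model's restrictions: connectivity of $\mathcal{G}$, whether patches must be connected or of a particular form such as one-hop neighborhoods, and the normalization of features. If patches must be exactly closed one-hop neighborhoods, I would reduce instead from Dominating Set or Vertex Cover restricted to budget $k$. Deciding whether $k$ closed neighborhoods can cover all of $\mathcal{V}$ is exactly the Dominating Set problem, which is NP-complete on connected graphs, so the reduction again works with $\lambda>0$ and constant features.
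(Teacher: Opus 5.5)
Your proposal is correct and takes the same route as the paper: reduce from Maximum Coverage by mapping ground elements to nodes and sets to candidate patches, and set $\lambda=1$ so that the objective becomes normalized coverage $|C_U|/N$. Your additions are refinements of that argument rather than a new one: the hub node respects the connectivity assumption, which the paper's construction leaves unaddressed, and the constant features extend hardness to every $\lambda\in(0,1]$.
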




The proof of Theorem~\ref{th-NP} is provided in the Appendix. Since the Patch Selection Problem is NP-hard, obtaining an exact solution is computationally infeasible for large graphs. Nevertheless, the objective function $F(U)$ is monotone and submodular with respect to the selected set $U$, as shown in the following theorem. These properties enable the use of a greedy algorithm with provable approximation guarantees.

\begin{theorem}\label{th-mo_and_sub}
Let $\mathcal{C}=\{S_1,S_2,\dots,S_t\}$ be the candidate patch family. 
Define the objective function $F(U) = \lambda f_{\mathrm{structure}}(U) + (1-\lambda) f_{\mathrm{feature}}(U)$ for any $U \subseteq \mathcal{C}$. 
Then the objective function is monotone and submodular, i.e., $F(A \cup \{S_i\}) - F(A) \ge 0$ for all $A \subseteq \mathcal{C}$ and $S_i \in \mathcal{C} \setminus A$, and 
$F(A \cup \{S_i\}) - F(A) \ge F(B \cup \{S_i\}) - F(B)$ for all $A \subseteq B \subseteq \mathcal{C}$ and $S_i \in \mathcal{C} \setminus B$.
\end{theorem}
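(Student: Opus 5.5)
The plan is to reduce both terms of $F$ to functions of the covered node set $C_U=\bigcup_{S\in U}S$, and then use a general composition fact. Write $F(U)=h(C_U)$, where for a node set $C\subseteq\mathcal{V}$
\[
h(C)=\lambda\frac{|C|}{N}+(1-\lambda)\frac{1}{N}\sum_{v=1}^{N}\max_{u\in C}\cos(\boldsymbol{x}_v,\boldsymbol{x}_u).
\]
The argument then has two steps: (a) $h$ is monotone and submodular as a function on subsets of $\mathcal{V}$; (b) if $h$ is monotone submodular on node sets, then $U\mapsto h(C_U)$ is monotone submodular on subsets of $\mathcal{C}$.

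For (a), nonnegative combinations ($\lambda,1-\lambda\ge 0$) preserve both properties, so it suffices to treat each term separately.
\begin{itemize}
\item The cardinality term $|C|/N$ is modular, hence monotone and submodular.
\item The feature term is a sum over $v$ of facility-location functions $g_v(C)=\max_{u\in C} c_{vu}$ with $c_{vu}=\cos(\boldsymbol{x}_v,\boldsymbol{x}_u)$. Monotonicity of $g_v$ is immediate, since the max is over a larger set.
\item For submodularity of $g_v$, take $C\subseteq D$ and $w\notin D$. Let $m_C=g_v(C)\le m_D=g_v(D)$. Then the marginal gains are $\max(0,c_{vw}-m_C)$ and $\max(0,c_{vw}-m_D)$. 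The first is at least the second because $t\mapsto\max(0,c_{vw}-t)$ is nonincreasing.
\end{itemize}

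For (b), fix $A\subseteq B\subseteq\mathcal{C}$ and $S_i\notin B$. Monotonicity follows since $C_{A\cup\{S_i\}}\supseteq C_A$ and $h$ is monotone. For submodularity, note that $C_A\subseteq C_B$. Enumerate the new nodes $S_i\setminus C_B=\{w_1,\dots,w_r\}$; these are also new relative to $C_A$. Then:
\begin{itemize}
\item By monotonicity, $h(C_A\cup S_i)-h(C_A)\ge h(C_A\cup\{w_1,\dots,w_r\})-h(C_A)$, since we drop from the left side the nodes of $S_i$ that lie in $C_B\setminus C_A$.
\item Telescoping both $h(C_A\cup\{w_1,\dots,w_r\})-h(C_A)$ and $h(C_B\cup S_i)-h(C_B)$ one node at a time, adding the $w_j$ in the same order, gives matching sums of single-element gains.
\item Because $C_A\cup\{w_1,\dots,w_{j-1}\}\subseteq C_B\cup\{w_1,\dots,w_{j-1}\}$, node-level diminishing returns bounds each $A$-side term below by the corresponding $B$-side term.
\end{itemize}
This yields $F(A\cup\{S_i\})-F(A)\ge F(B\cup\{S_i\})-F(B)$.

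The main obstacle I expect is the empty-set convention for $\max_{u\in C_U}$ in $f_{\mathrm{feature}}$, since cosine similarities may be negative. I would define the max over the empty set as $-1$ (the minimum possible cosine), or equivalently shift the objective by a constant. With that convention, $g_v(\emptyset)$ is not larger than any attainable value, and the monotonicity and diminishing-returns computations above remain valid when $A=\emptyset$. I would state this convention explicitly. Otherwise (for instance with the value $0$) the first patch could yield a negative gain when all cosines are negative, and monotonicity would fail. Everything else is routine bookkeeping.
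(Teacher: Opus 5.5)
Your proof is correct, but it is organized differently from the paper's. The paper never passes to a node-level set function. Instead it writes the patch-level marginal gains in closed form: $|S_i\setminus C_A|/N$ for the structural term, and $\frac1N\sum_v\max\{m_{S_i}(v)-M_A(v),0\}$ for the feature term, where $m_{S_i}(v)=\max_{u\in S_i}\cos(\xx_v,\xx_u)$ and $M_A(v)=\max_{u\in C_A}\cos(\xx_v,\xx_u)$. Because $M_{A\cup\{S_i\}}=\max\{M_A,m_{S_i}\}$, an entire patch acts like a single facility. Then $C_A\subseteq C_B$ gives $M_A\le M_B$, which finishes the argument pointwise in $v$. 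Your step (b) is therefore not needed: neither the telescoping over $w_1,\dots,w_r$ nor the monotonicity step that discards $S_i\cap(C_B\setminus C_A)$. What your route buys is generality. You have effectively shown that $U\mapsto h(\bigcup_{S\in U}S)$ is monotone submodular for \emph{any} monotone submodular node score $h$, and monotonicity of $h$ is genuinely needed for that lemma. So your result would survive swapping in a different feature term. The cost is extra bookkeeping that the specific cardinality/max structure makes avoidable.

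Your empty-set caveat is a real point, and the paper's proof skips it. Its gain formula at $A=\emptyset$ silently requires $M_\emptyset(v)\le m_{S_i}(v)$, i.e.\ a convention like your $-1$. With $M_\emptyset\equiv 0$ and unclipped cosines, monotonicity can indeed fail at $A=\emptyset$. Nodes inside $S_i$ contribute $1$, so not all cosines are negative. But many outside nodes with negative similarity to all of $S_i$ can make the feature gain negative, and for small $\lambda$ the gain of $F$ as well. Algorithm~\ref{alg:greedy} initializes $M_U(v)\leftarrow 0$ and updates by maxima, so it is in effect optimizing the objective with similarities clipped at $0$. For that clipped objective the $0$ convention is consistent.
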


The detailed proof of  Theorem~\ref{th-mo_and_sub} is provided in the Appendix.

\subsubsection{Greedy Algorithm}

By Theorem~\ref{th-NP}, the Patch Selection Problem is NP-hard. Nevertheless, Theorem~\ref{th-mo_and_sub} shows that the objective function $F(U)$ is monotone and submodular. According to the classical result of Nemhauser et al.~\cite{nemhauser1978analysis}, a greedy strategy achieves a $(1-1/e)$-approximation to the optimal solution. The pseudocode is shown in Algorithm~\ref{alg:greedy} in the appendix. The algorithm proceeds for $k$ iterations, and in each round greedily selects the patch from the remaining candidates that provides the largest marginal gain of $F(U)$, which balances structural and feature coverage.











Suppose there are $t$ candidate patches, each of size at most $p$, the graph has $N$ nodes with feature dimension $F$, and we aim to select $k$ patches. Precomputing patch–node similarities costs $O(tNpF)$. Each of the $k$ iterations then scans all remaining candidates with $O(N)$ cost, yielding an additional $O(ktN)$. With precomputation and incremental updates, the overall time complexity of the greedy algorithm is $O(t N F p + k t N)$, which scales efficiently in practice while still providing provable approximation guarantees.

\subsection{Global Projection and Local Diffusion Projection}
In this subsection, we describe how graph information is mapped into the LLM representation space. To preserve both global and local structures, we design two complementary modules: a global projection path, which encodes the entire graph via a GNN and a projector, and a local diffusion projection path, which enriches selected patches with contextual information before projection.

\subsubsection{Diffusion Operator}
For the local view, encoding each patch separately may lose important context, because nodes inside a patch have limited awareness of the overall graph structure. To address this, we introduce an information diffusion process over the graph. Each node $i$ is initialized with its feature vector $\zz_i(0)=\xx_i \in \mathbb{R}^{F\times 1}$. At each diffusion step $q=1,2,\dots$, the feature of node $i$ is updated by aggregating its own feature and those of its neighbors:
\begin{equation}\label{FJeq}
\zz_i(q+1) = \frac{\xx_i + \alpha \sum_{j \in N_i} \zz_j(q)}{1 + \alpha d_i},
\end{equation}
where $d_i$ is the degree of node $i$ and $\alpha$ is a diffusion coefficient. Suppose that 
$\XX=[\xx_1,\dots,\xx_N]^\top$ denotes the matrix of input features, 
and $\ZZ(q)=[\zz_1(q),\dots,\zz_N(q)]^\top$ stacks the node embeddings after $q$ diffusion steps.  The following theorem shows that as $q \to \infty$, $\ZZ(q)$ converges to $(\II+\alpha \LL)^{-1}\XX$.

\begin{theorem}\label{th-diffusion}
For any $\alpha>0$ and any initialization $\ZZ(0)$, the iteration \eqref{FJeq} converges to a unique fixed point $\ZZ^\star$ that satisfies $\ZZ^\star \;=\; (\II+\alpha \LL)^{-1}\XX$.
\end{theorem}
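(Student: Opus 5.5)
Write the iteration \eqref{FJeq} in matrix form. Multiplying through by $1+\alpha d_i$ and stacking rows gives
\[
\ZZ(q+1) = (\II+\alpha\DD)^{-1}\bigl(\XX + \alpha \AA \ZZ(q)\bigr),
\]
an affine map $\ZZ \mapsto \MM\ZZ + \bb$ with $\MM = \alpha(\II+\alpha\DD)^{-1}\AA$ and $\bb = (\II+\alpha\DD)^{-1}\XX$. The plan has three steps: identify the fixed point, show $\MM$ is a contraction (spectral radius $<1$), and conclude convergence from any $\ZZ(0)$ by the standard affine-iteration argument.

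\textbf{Step 1: the fixed point.} Any fixed point satisfies $(\II+\alpha\DD)\ZZ^\star = \XX + \alpha\AA\ZZ^\star$, that is, $(\II+\alpha(\DD-\AA))\ZZ^\star = \XX$, or $(\II+\alpha\LL)\ZZ^\star=\XX$. Since $\LL$ is symmetric positive semidefinite and $\alpha>0$, the matrix $\II+\alpha\LL$ has all eigenvalues $\ge 1$. It is therefore invertible, so the fixed point exists, is unique, and equals $(\II+\alpha\LL)^{-1}\XX$.

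\textbf{Step 2: contraction, the main step.} I would use the induced $\infty$-norm, which is the maximum absolute row sum. Row $i$ of $\MM$ has nonnegative entries summing to
\[
\frac{\alpha d_i}{1+\alpha d_i} \le \frac{\alpha d_{\max}}{1+\alpha d_{\max}} =: \rho < 1.
\]
Hence $\|\MM\|_\infty \le \rho<1$, which holds even for isolated nodes, whose row sum is $0$. An alternative is to note that $\MM$ is similar to the symmetric matrix $\alpha(\II+\alpha\DD)^{-1/2}\AA(\II+\alpha\DD)^{-1/2}$ and bound its eigenvalues via Gershgorin, but the row-sum bound is simpler. I expect this to be the only step needing care, because the bound must be strict and uniform in $i$. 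Strictness comes precisely from the $\xx_i$ term in the numerator, which gives the extra $1$ in the denominator.

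\textbf{Step 3: convergence.} Let $\EE(q)=\ZZ(q)-\ZZ^\star$. Subtracting the fixed-point equation gives $\EE(q+1)=\MM\EE(q)$, so $\EE(q)=\MM^q\EE(0)$. Applying the bound column by column yields $\|\EE(q)\|_\infty \le \rho^q\|\EE(0)\|_\infty \to 0$ for any initialization $\ZZ(0)$. This gives geometric convergence to $\ZZ^\star=(\II+\alpha\LL)^{-1}\XX$. Equivalently, summing the Neumann series $\sum_{q\ge0}\MM^q\bb = (\II-\MM)^{-1}\bb$ and simplifying $(\II-\MM)^{-1}(\II+\alpha\DD)^{-1} = (\II+\alpha\LL)^{-1}$ recovers the same closed form.
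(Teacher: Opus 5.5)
Your proposal is correct and follows essentially the same route as the paper. Both write the iteration as the affine map $\ZZ \mapsto \JJ\ZZ + \BB$ with $\JJ = \alpha(\II+\alpha\DD)^{-1}\AA$, bound the maximum absolute row sum by $\alpha d_i/(1+\alpha d_i) < 1$ to get convergence from any initialization, and multiply the fixed-point equation by $\II+\alpha\DD$ to reach $(\II+\alpha\LL)\ZZ^\star = \XX$; your uniform constant $\alpha d_{\max}/(1+\alpha d_{\max})$ and your separate invertibility argument via positive semidefiniteness of $\LL$ are minor refinements, not a different approach.
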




The proof of Theorem~\ref{th-diffusion} is included in the Appendix. Specifically, Theorem~\ref{th-diffusion} shows that $(\II+\alpha \LL)^{-1}$ serves as the fundamental matrix in this diffusion process. This matrix can be interpreted both as a kernel function and as a low-pass filter, allowing local features to be blended with long-range structural information. Compared with conventional GNN message passing, this diffusion operator provides a more direct way to incorporate long-range dependencies. As a result, it enriches local patch representations with global context, thereby improving the quality of graph representations presented to the LLM.



\subsubsection{GNN Encoder and Projector}
After preparing both global and local graph views, we employ a GNN encoder together with two projection modules to transform node features into latent representations.

\textbf{Global Path.} For the global view, the entire graph $\mathcal{G}$ is passed into a GNN encoder, producing node-level embeddings. We then apply a pooling operation over all nodes to derive a compact global representation, which is projected into the LLM token space through a lightweight multi-layer perceptron (MLP), serving as the global projector.

\textbf{Local Path.} For the local view, each selected patch is first processed by the diffusion operator described in the previous subsection, yielding diffused node embeddings. These embeddings are then encoded by a GNN encoder, followed by pooling within each patch to obtain patch-level representations. Each patch representation is subsequently mapped into the LLM token space through another MLP, referred to as the local projector.

This dual-path design allows the global projector to capture an overall view of the graph, while the local projector delivers enriched, fine-grained subgraph representations. Together, they generate a multi-token sequence that integrates both global context and local structural details, providing the LLM with graph tokens that carry richer contextual information for reasoning.

\subsection{Loss Design}

We adopt an open-source LLM (Mistral-7B and Qwen2.5-7B) as the backbone. In our framework, the LLM is used as a semantic decision interface, while the graph encoder and projection modules are optimized to produce graph tokens that are well aligned with the LLM token space. We design different training objectives for the pretraining and fine-tuning stages, which we describe below.

\subsubsection{Pretraining Stage}

In the pretraining stage, we aim to align graph representations in the LLM token space in a self-supervised manner. Suppose a batch contains $B$ graphs, and each graph $G$ is augmented into two correlated views, denoted as $\mathcal{G}_1$ and $\mathcal{G}_2$. Each pair $(\mathcal{G}_1,\mathcal{G}_2)$ constitutes a positive sample, while pairs formed from different original graphs are treated as negative samples. Based on this setup, our pretraining objective consists of two components: an LLM-based prediction loss and a contrastive alignment loss.

\paragraph{LLM Prediction Loss.}
To leverage the LLM as a semantic decision interface in graph representation learning, we adopt a prompt-based prediction task. Specifically, we prepend a natural language prompt such as \textit{``Are these two graphs from the same original graph?''} to the input sequence composed of graph tokens.

Supervision is applied using a cross-entropy objective. Pairs of two augmented views from the same graph are treated as positive samples, with the target label \textit{``Yes''}. Negative samples are formed by pairing the first augmentation of one graph with the second augmentation of a different graph. To maintain balance, the number of negative pairs is kept equal to that of positive pairs. For these negative pairs, the target label is \textit{``No''}. The resulting cross-entropy loss is denoted as $\mathcal{L}_{\text{LLM}}$.

\paragraph{Contrastive Alignment Loss.}
To further enforce consistency between different augmented views, we introduce a contrastive alignment loss based on the InfoNCE objective. For each graph, two augmented views are generated, and each view is mapped into one global token and multiple local tokens. We compare both the global tokens and the averaged local tokens across views.

Specifically, let $\uu_i^{(1)}$ and $\uu_j^{(2)}$ denote the embeddings of the $i$-th and $j$-th graphs from the two views (either global or averaged local). The InfoNCE loss is defined as
\begin{equation}
\mathcal{L}_{\text{NCE}} 
= -\frac{1}{B}\sum_{i=1}^B 
\log \frac{\exp\big(s(\uu_i^{(1)},\uu_i^{(2)})/\tau\big)}
{\sum_{j=1}^B \exp\big(s(\uu_i^{(1)},\uu_j^{(2)})/\tau\big)},
\end{equation}
where $s(\cdot,\cdot)$ denotes cosine similarity and $\tau$ is a temperature hyperparameter.

We combine the global- and local-level objectives using a convex weight $\beta \in [0,1]$: $\mathcal{L}_{\text{contrast}} 
= \beta \,\mathcal{L}_{\text{NCE\_Global}} 
+ (1-\beta)\,\mathcal{L}_{\text{NCE\_Local}}$.

\paragraph{Final Pretraining Loss.}
The overall pretraining loss combines the LLM prediction objective with the contrastive alignment loss: $\mathcal{L}_{\text{pre}} = \gamma \cdot \mathcal{L}_{\text{LLM}} + (1-\gamma) \cdot \mathcal{L}_{\text{contrast}}$, where $\gamma \in [0,1]$ balances the two components.

This pretraining objective uses the LLM as a semantic decision interface to guide the graph encoder and projectors to organize graph tokens according to their similarities and differences in the LLM token space, without requiring any task-specific labels.

\subsubsection{Fine-tuning Stage}
In the fine-tuning stage, we adapt the pretrained model to downstream tasks using labeled data. Depending on the task, the supervision signal can be either a discrete class label (for graph classification) or a natural language output (for graph reasoning). We formulate fine-tuning as a conditional prediction problem and optimize a standard cross-entropy loss over the target outputs. Concretely, the graph encoder and projection modules are initialized with the parameters learned in the self-supervised pretraining stage and are further updated to fit task-specific objectives. This design transfers the semantic alignment established during pretraining to task-specific decision boundaries, enabling more sample-efficient learning under limited labeled data.

\section{Experiments}
\label{exp}

In this section, we present experiments to systematically evaluate the effectiveness of GLIP.   We aim to answer the following research questions:  
\textbf{RQ1:} How well does GLIP perform on semi-supervised graph classification?  
\textbf{RQ2:} How well does GLIP perform on few-shot graph classification?  
\textbf{RQ3:} How effective is GLIP for graph reasoning tasks? 
\textbf{RQ4:} What is the impact of each component in the modular design of GLIP?  

\begin{table}[t!]
\centering
\fontsize{8.5}{12}\selectfont  
\setlength{\tabcolsep}{3pt} 
\caption{Statistics of the datasets used in experiments.}
\label{tb-dataset}
\begin{tabular}{ccccccc}
\hline
Dataset      & \#Graphs & \#Nodes & \#Edges & \#Feats & Neg:Pos & TAG   \\ \hline
MUTAG         & 188           & 17.93     & 39.59     & 7               & 63:125    & No         \\
PROTEINS      & 1113          & 39.06     & 145.63    & 4               & 663:450   & No         \\
REDDIT & 2000          & 429.63    & 995.51    & 0               & 1000:1000 & No         \\
IMDB   & 1000          & 19.77     & 193.06    & 0               & 500:500   & No         \\ \hline
BBBP          & 2039          & 24.06     & 51.91     & 9               & 479:1560  & Yes        \\
BACE          & 1513          & 34.09     & 73.72     & 9               & 822:691   & Yes        \\
E-Com         & 2000           & 18.61     & 425.84    & 526             & 1000:1000   & Yes        \\ \hline
\end{tabular}
\end{table}

\subsection{Experimental Setup}

\noindent\textbf{Datasets.} 
We evaluate GLIP on seven datasets that cover both feature-based and text-attributed graphs (TAG), as summarized in Table~\ref{tb-dataset}. The feature-based datasets include MUTAG, PROTEINS, REDDIT-BINARY (abbreviated as REDDIT), and IMDB-BINARY (IMDB), all from the TUDataset collection~\cite{morris2020tudataset}. MUTAG and PROTEINS are molecular and protein datasets with 7- and 4-dimensional node features, respectively. REDDIT and IMDB lack intrinsic node features, so we follow standard practice by using node degrees as feature inputs. The text-attributed datasets include BBBP, BACE, and E-Commerce (E-Com).
BBBP and BACE are molecular property prediction benchmarks from~\cite{feng2024taglas}, where each node is accompanied by textual annotations of its chemical properties.
E-Com is an industrial e-commerce dataset, where each graph represents a community of TikTok Shop sellers, and the task is to predict whether the group is risky. Table~\ref{tb-dataset} reports dataset statistics, where \#Graphs denotes the number of graphs, \#Nodes and \#Edges represent the average number of nodes and edges per graph, \#Feats is the node feature dimension, Neg:Pos indicates the negative/positive sample ratio, and TAG shows whether the dataset contains text-attributed graphs.

\begin{table*}[htbp!]
\centering
\caption{\textbf{Performance comparison under semi-supervised settings  with Accuracy (\%) reported.} 
The \colorbox[HTML]{FFDEC7}{\textbf{best}} and \colorbox[HTML]{FFF2E9}{second-best} results are highlighted. Results are averaged over five runs. }
\label{tb-semi-accuracy}
\fontsize{8.5}{13}\selectfont  
\setlength{\tabcolsep}{4pt} 
\begin{tabular}{cc|ccccc|cccc}
\hline
\multicolumn{2}{c|}{} &
  \multicolumn{5}{c|}{\textbf{Feature Graphs}} &
  \multicolumn{4}{c}{\textbf{Text-Attributed Graphs}} \\
\multicolumn{2}{c|}{\multirow{-2}{*}{\textbf{\begin{tabular}[c]{@{}c@{}}Accuracy \\Semi-Supervised  \end{tabular}}}} &
  \textbf{MUTAG} &
  \textbf{PROTEINS} &
  \textbf{REDDIT} &
  \textbf{IMDB} &
  \textbf{Average} &
  \textbf{BBBP} &
  \textbf{BACE} &
  \textbf{E-Com} &
  \textbf{Average} \\ \hline
 &
  GraphMAE &
  $66.89_{\pm 1.32}$ &
  $60.23_{\pm 1.87}$ &
  $80.40_{\pm 4.30}$ &
  $53.08_{\pm 5.78}$ &
  $65.15$ &
  $76.52_{\pm 0.02}$ &
  $54.11_{\pm 0.46}$ &
  $83.38_{\pm 1.24}$ &
  $71.34$ \\
 &
  GraphCL &
  $70.84_{\pm 5.04}$ &
  $67.48_{\pm 1.96}$ &
  $86.19_{\pm 2.02}$ &
  $60.85_{\pm 2.47}$ &
  $71.34$ &
  $77.22_{\pm 1.28}$ &
  $56.48_{\pm 2.70}$ &
  $83.14_{\pm 2.03}$ &
  $72.28$ \\
 &
  DGI &
  $68.75_{\pm 3.09}$ &
  $61.66_{\pm 2.53}$ &
  $84.92_{\pm 2.20}$ &
  $55.75_{\pm 2.72}$ &
  $67.77$ &
  $76.53_{\pm 11.20}$ &
  $56.32_{\pm 2.79}$ &
  $82.11_{\pm 2.04}$ &
  $71.65$ \\
\multirow{-4}{*}{\textbf{\begin{tabular}[c]{@{}c@{}}Graph\\ Pretrain\\ Method\end{tabular}}} &
  SimGRACE &
  $67.95_{\pm 2.35}$ &
  $60.87_{\pm 2.20}$ &
  $83.35_{\pm 2.34}$ &
  $60.15_{\pm 3.08}$ &
  $68.08$ &
  $76.41_{\pm 0.24}$ &
  $54.07_{\pm 4.03}$ &
  $83.18_{\pm 1.28}$ &
  $71.22$ \\ \hline
 &
  GraphGPT-Mistral &
  - &
  - &
  - &
  - &
  - &
  $78.42_{\pm 2.32}$ &
  $54.76_{\pm 1.34}$ &
  $80.91_{\pm 1.01}$ &
  $71.36$ \\
 &
  GraphGPT-Qwen &
  - &
  - &
  - &
  - &
  - &
  $77.22_{\pm 1.34}$ &
  $55.76_{\pm 1.27}$ &
  $81.33_{\pm 2.13}$ &
  $71.44$ \\
 &
  LLaGA-Mistral &
  $68.35_{\pm 4.38}$ &
  $59.46_{\pm 0.30}$ &
  $75.84_{\pm 3.55}$ &
  $52.20_{\pm 2.67}$ &
  $63.96$ &
  $76.53_{\pm 0.00}$ &
  $54.55_{\pm 1.13}$ &
  $81.97_{\pm 3.97}$ &
  $71.02$ \\
 &
  LLaGA-Qwen &
  $68.08_{\pm 3.42}$ &
  $64.04_{\pm 1.91}$ &
  $77.72_{\pm 2.41}$ &
  $55.18_{\pm 1.95}$ &
  $66.26$ &
  $76.60_{\pm 0.24}$ &
  $53.32_{\pm 1.77}$ &
  $83.82_{\pm 1.59}$ &
  $71.25$ \\
 &
  TEA-GLM-Mistral &
  $71.10_{\pm 2.22}$ &
  $65.12_{\pm 12.03}$ &
  \cellcolor[HTML]{FFF2E9}$86.60_{\pm 2.34}$ &
  $61.85_{\pm 3.75}$ &
  $71.17$ &
  $79.73_{\pm 1.96}$ &
  \cellcolor[HTML]{FFF2E9}$60.96_{\pm 2.77}$ &
  $81.23_{\pm 1.91}$ &
  $73.97$ \\
\multirow{-6}{*}{\textbf{\begin{tabular}[c]{@{}c@{}}Graph-\\ LLM\\ Method\end{tabular}}} &
  TEA-GLM-Qwen &
  $66.36_{\pm 6.92}$ &
  $67.88_{\pm 3.22}$ &
  $85.02_{\pm 4.39}$ &
  $60.95_{\pm 2.05}$ &
  $70.05$ &
  $78.40_{\pm 3.98}$ &
  $58.99_{\pm 5.58}$ &
  $84.05_{\pm 1.52}$ &
  $73.81$ \\ \hline
 &
  GLIP-Mistral &
  \cellcolor[HTML]{FFF2E9}$72.45_{\pm 1.45}$ &
  \cellcolor[HTML]{FFDEC7}\textbf{$68.37_{\pm 2.22}$} &
  \cellcolor[HTML]{FFDEC7}\textbf{$87.21_{\pm 1.17}$} &
  \cellcolor[HTML]{FFDEC7}\textbf{$63.72_{\pm 2.68}$} &
  \cellcolor[HTML]{FFDEC7}$72.94$ &
  \cellcolor[HTML]{FFDEC7}\textbf{$80.99_{\pm 1.41}$} &
  \cellcolor[HTML]{FFDEC7}\textbf{$62.63_{\pm 1.80}$} &
  \cellcolor[HTML]{FFDEC7}\textbf{$86.38_{\pm 1.78}$} &
  \cellcolor[HTML]{FFDEC7}$76.67$ \\
\multirow{-2}{*}{\textbf{Ours}} &
  GLIP-Qwen &
  \cellcolor[HTML]{FFDEC7}\textbf{$73.84_{\pm 3.11}$} &
  \cellcolor[HTML]{FFF2E9}$68.37_{\pm 2.41}$ &
  $86.31_{\pm 1.74}$ &
  \cellcolor[HTML]{FFF2E9}$61.95_{\pm 5.40}$ &
  \cellcolor[HTML]{FFF2E9}$72.62$ &
  \cellcolor[HTML]{FFF2E9}$80.04_{\pm 1.20}$ &
  $59.50_{\pm 2.43}$ &
  \cellcolor[HTML]{FFF2E9}$84.17_{\pm 1.04}$ &
  \cellcolor[HTML]{FFF2E9}$74.57$ \\ \hline
\end{tabular}
\end{table*}

\noindent\textbf{Baselines.} 
We compare GLIP with a wide range of representative baselines, including both graph pretraining methods and graph–LLM methods. The graph pretraining baselines cover four classical self-supervised graph learning algorithms {GraphMAE}~\cite{hou2022graphmae}, {GraphCL}~\cite{you2020graph},  {DGI}~\cite{velivckovic2018deep}, and {SimGRACE}~\cite{xia2022simgrace}. These algorithms enhance graph structures and features through contrastive learning or masked reconstruction objectives to learn expressive representations without supervision. We also consider graph–LLM methods include recent frameworks that integrate large language models with graph encoders, namely {GraphGPT}~\cite{tang2024graphgpt}, {LLaGA}~\cite{chen2024llaga}, and {TEA-GLM}~\cite{wang2024llms}.  Given that the three methods are tailored to node-level settings and, to our knowledge, no specialized graph-level graph-LLM method is available, we derive graph-level embeddings by applying average pooling to the node representations for all three methods.

\noindent\textbf{Implementations.} 
We evaluate two metrics: Accuracy and Macro F1-score. Each result is averaged over five runs, and both the mean and standard deviation are reported. We adopt GCN~\cite{kipf2016semi} as the graph encoder backbone. Each graph–LLM model is evaluated under two instruction-tuned backbones, Mistral-7B-Instruct-v0.2~\cite{jiang2023mistral7b} (Mistral) and Qwen2.5-7B-Instruct~\cite{bai2023qwen} (Qwen), to ensure fairness across architectures. In semi-supervised classification, we use a data split of 5\% for training, 15\% for validation, and 80\% for testing. All experiments are conducted on a single NVIDIA H100 GPU with 80 GB of memory. Our source code is publicly available on \url{https://anonymous.4open.science/r/GLIP}. More implementation details are provided in the Appendix.

\subsection{Semi-Supervised Graph Classification (RQ1)}

\noindent\textbf{Results and Analysis.}
Under the semi-supervised setting, we adopt a 5\%/15\%/80\% split for training, validation, and testing, respectively. With limited supervision, pretraining-based methods demonstrate clear advantages. As shown in Table~\ref{tb-semi-accuracy} (Accuracy) and Table~\ref{tb-semi-f1} in the Appendix (F1 score), classical graph pretraining approaches already surpass early Graph–LLM pipelines (e.g., LLaGA, GraphGPT), highlighting the importance of robust structural representation learning under label scarcity. Our proposed {GLIP} pushes this further and achieves the best overall performance in both Accuracy and Macro-F1. For example, on feature graphs, {GLIP-Mistral} attains an average {accuracy} of 72.94\% (Table~\ref{tb-semi-accuracy}); on TAG datasets, it reaches an average {accuracy} of 76.67\%.

On the four feature-only datasets (MUTAG, PROTEINS, REDDIT, IMDB), GraphGPT is inapplicable due to the lack of textual inputs.  GLIP yields consistent gains across all datasets. For instance, \textbf{GLIP-Qwen} achieves 73.84\% on MUTAG and 68.37\% on PROTEINS, outperforming LLaGA and TEA-GLM by roughly 2–10 percentage points in accuracy. On the three text-attributed graphs (BBBP, BACE, E-Com), GLIP maintains its lead over both graph pretraining and graph–LLM counterparts under the same LLM backbone. With Mistral, GLIP attains 80.99\% (BBBP), 62.63\% (BACE), and 86.38\% (E-Com) in accuracy, consistently surpassing TEA-GLM, LLaGA, and GraphGPT. A similar pattern holds with Qwen, where GLIP matches or exceeds the strongest baselines on all datasets in both accuracy and F1. These results indicate that integrating global and local representations, together with diffusion-enhanced patches and LLM-guided supervision, enables GLIP to capture richer structural and semantic contexts across both {feature graphs} and  {text-attributed graphs}, leading to strong and consistent performance even under limited supervision.

\subsection{Few Shot Graph Classification  (RQ2)}

\begin{table*}[htbp!]
\centering
\fontsize{8.5}{12}\selectfont  
\caption{Graph reasoning performance compared with GPT-4o outputs with \colorbox[HTML]{FFDEC7}{\textbf{best}} and \colorbox[HTML]{FFF2E9}{second-best} results highlighted. }
\label{tb-bert}
\begin{tabular}{ccccccccccc}\hline
 &
  \multicolumn{3}{c}{BERT Score} &
  \multicolumn{4}{c}{BLEU} &
  \multicolumn{3}{c}{ROUGE} \\
\multirow{-2}{*}{Model} &
  Precision &
  Recall &
  F1 &
  BLEU-1 &
  BLEU-2 &
  BLEU-3 &
  BLEU-4 &
  ROUGE-1 &
  ROUGE-2 &
  ROUGE-L \\ \hline
GraphGPT &
  78.41 &
  73.52 &
  75.87 &
  45.42 &
  30.51 &
  22.04 &
  16.89 &
  48.52 &
  21.91 &
  46.45 \\
LLaGA &
  \cellcolor[HTML]{FFF2E9}80.23 &
  \cellcolor[HTML]{FFF2E9}75.44 &
  \cellcolor[HTML]{FFF2E9}77.76 &
  47.35 &
  34.82 &
  25.63 &
  19.97 &
  \cellcolor[HTML]{FFF2E9}50.80 &
  24.01 &
  49.62 \\
TEA-GLM &
  79.66 &
  75.12 &
  77.32 &
  \cellcolor[HTML]{FFF2E9}47.98 &
  \cellcolor[HTML]{FFF2E9}35.35 &
  \cellcolor[HTML]{FFF2E9}26.72 &
  \cellcolor[HTML]{FFF2E9}20.64 &
  49.87 &
  \cellcolor[HTML]{FFF2E9}25.42 &
  \cellcolor[HTML]{FFF2E9}50.90 \\
GLIP &
  \cellcolor[HTML]{FFDEC7}82.71 &
  \cellcolor[HTML]{FFDEC7}81.52 &
  \cellcolor[HTML]{FFDEC7}82.11 &
  \cellcolor[HTML]{FFDEC7}54.71 &
  \cellcolor[HTML]{FFDEC7}41.21 &
  \cellcolor[HTML]{FFDEC7}30.79 &
  \cellcolor[HTML]{FFDEC7}24.68 &
  \cellcolor[HTML]{FFDEC7}61.82 &
  \cellcolor[HTML]{FFDEC7}34.21 &
  \cellcolor[HTML]{FFDEC7}59.78 \\ \hline
\end{tabular}
\end{table*}

The few-shot setting presents an even more challenging scenario than the semi-supervised one. Here, we report the results of {5-shot} classification, where only five labeled samples per class are available for training, while the results of the 3-shot setting are provided in the Appendix. This extreme data scarcity severely constrains the effectiveness of conventional graph pretraining and graph–LLM methods. Nevertheless, {GLIP} exhibits remarkable robustness under this condition, achieving consistent and substantial improvements on both {feature graphs} and {text-attributed graphs} in terms of accuracy and F1 score, as shown in Table~\ref{tb-5-accuracy} and Table~\ref{tb-5-F1} (Appendix).

On the {feature graph datasets} (MUTAG, PROTEINS, REDDIT, IMDB), GLIP significantly outperforms the compared baselines. Specifically, {GLIP-Mistral} attains an average accuracy of 66.73\% and F1 of 65.17\%, while {GLIP-Qwen} further reaches 67.33\% accuracy and 65.48\% F1. These results consistently surpass TEA-GLM, LLaGA, and classical graph pretraining methods such as GraphCL and DGI. 

On the {text-attributed datasets} (BBBP, BACE, E-Com), GLIP-Qwen achieves either the best or second-best results across all methods. For example, {GLIP-Qwen} records 74.85\% accuracy on BBBP and 77.04\% on E-Com, outperforming other graph–LLM frameworks like TEA-GLM and GraphGPT. These results demonstrate that, even under the extreme few-shot regime, the proposed patch selection mechanism and diffusion-based contextual encoding substantially enhance representation quality, enabling GLIP to extract more discriminative features from very limited supervision.

\subsection{Graph Reasoning Task (RQ3)}

\begin{figure*}[htbp!]
  \centering
  \includegraphics[width=1\linewidth]{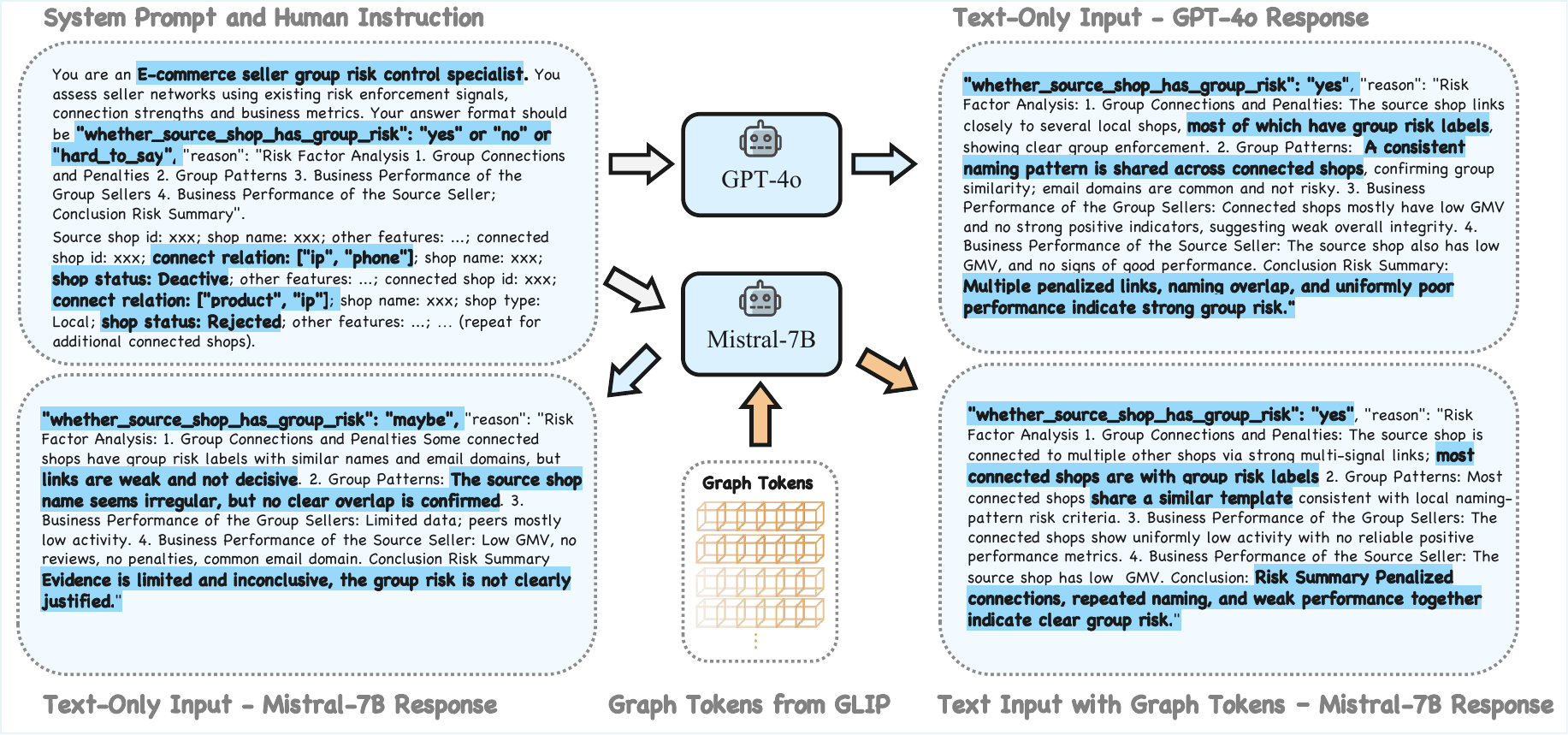}
  \caption{Case Study: Comparative Responses of GPT-4o and Mistral-7B with and without Graph Tokens from GLIP.}
  \label{fig:case}
  \vspace{-3mm}
\end{figure*}




To further assess the reasoning capability of GLIP, we conduct a graph-level reasoning task on the  {E-Com} dataset using  {Mistral-7B}. Following prior Graph-LLM baselines such as GraphGPT, TEA-GLM, and LLaGA, we report results with the LLM backbone (Mistral-7B) kept frozen. Additional experiments with an unfrozen LLM are provided in the appendix. In this task, each input corresponds to a  source seller  connected to multiple neighboring shops through various relational attributes such as shared IPs, product images, or registration devices. The model is required to infer whether the source shop exhibits potential group risk and whether it should be deactivated, strictly following the pre-defined structured output format: {"whether source shop has group risk": "yes/no/hard\_to\_say", "reason": "Risk Factor Analysis 1--4; Conclusion Risk Summary."} Two examples are provided to guide the model's reasoning and response formatting.

To construct reliable supervision for this reasoning task, we first query GPT-4o with the same prompt and input descriptions for each seller case, and treat GPT-4o's structured responses including both the final decision and the detailed reasoning as reference answers. These GPT-4o outputs serve as pseudo ground-truth annotations, reflecting high-quality and consistent reasoning patterns over the seller graphs.

We then  compare three configurations: (i) GPT-4o with text input, (ii)Mistral-7B with text input, and (iii) Mistral-7B with graph tokens generated by GLIP. As shown in Figure~\ref{fig:case}, GPT-4o produces correct and well-structured outputs that strictly follow the predefined reasoning format. In contrast, the frozen Mistral-7B model fails to adhere to the required structure and often outputs ambiguous judgments such as “maybe,” reflecting its limited ability to interpret relational dependencies. It also struggles to recognize key graph-based risk patterns, such as repetitive naming conventions among connected shops, which are indicative of potential collusive groups.

After incorporating GLIP’s graph tokens, the reasoning quality of Mistral-7B improves substantially.  The graph tokens provide explicit structural cues that help (1) reduce hallucination and enforce response consistency, (2) enhance structural reasoning and judgment accuracy, and (3) enable smaller models to approach GPT-level analytical reliability in graph reasoning tasks, even when the LLM backbone remains frozen. Table~\ref{tb-bert} reports the quantitative results of the graph reasoning task, where GPT-4o’s outputs are used as the reference standard. We evaluate the models using BERTScore~\cite{zhang2019bertscore}, BLEU~\cite{papineni2002bleu}, and ROUGE~\cite{lin2004rouge} to measure semantic fidelity, linguistic coherence, and structural consistency. As shown, GLIP-Mistral achieves substantial improvements over all baselines, particularly outperforming the text-only Mistral variant by a large margin across all metrics. GLIP achieves a BERTScore-F1 of 0.821 and a ROUGE-L of 0.598, demonstrating that GLIP effectively enables frozen small-scale LLMs to generate structured, semantically faithful, and logically consistent reasoning outputs comparable to much larger models.



\begin{figure}[t]
  \centering
  \includegraphics[width=1\linewidth]{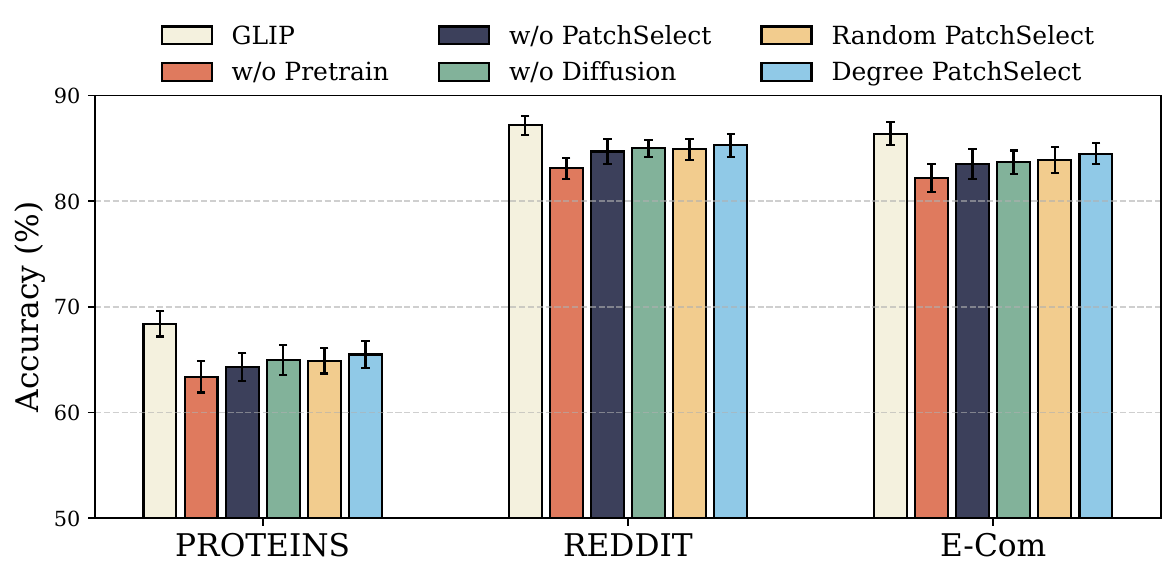}
    \vspace{-3mm}
  \caption{Ablation Results on Key Components of GLIP.}
    \vspace{-3mm}
  \label{fig:ab}
\end{figure}

\subsection{Ablation Study (RQ4)}

Figure~\ref{fig:ab} presents the ablation results of our GLIP framework on three representative datasets (PROTEINS, REDDIT, and E-Com) under the semi-supervised setting. We systematically remove or replace key components to assess their contributions. As shown, removing the pretraining stage ({w/o Pretrain}) leads to a significant accuracy drop across all datasets, indicating that our joint pretraining strategy plays a crucial role in improving downstream performance. Likewise, removing the patch selection module ({w/o PatchSelect}) or the diffusion process ({w/o Diffusion}) consistently degrades performance, demonstrating that both local structural awareness and global information propagation are essential.


We further compare different patch selection strategies. The Random PatchSelect baseline performs markedly worse than our adaptive strategy, confirming the benefit of informed patch construction. Degree PatchSelect, which prioritizes high-degree nodes and their one-hop neighbors, achieves slightly better results than random sampling but still underperforms our method. This indicates that purely structural heuristics are insufficient, while GLIP’s joint structural–feature selection better balances local diversity and global coherence, leading to superior graph understanding.



   \vspace{-2mm}
\section{Conclusion}
In this paper, we proposed GLIP, a Graph-LLM joint pretraining framework for graph-level tasks. GLIP adopts a label-free, self-supervised pretraining scheme that involves large language models as a semantic decision interface, guiding the learning of graph representations to be organized according to their similarities and differences in the LLM token space. The framework introduces a multi-token patch selection strategy to integrate both global and local structural and feature information, and incorporates a diffusion-based projection operator to enhance contextual dependencies among selected patches, enabling more effective semantic understanding of graphs. Building upon this pretrained representation space, GLIP further adapts the model to downstream tasks through task-specific fine-tuning with limited labeled data.

Extensive experiments on both feature-based and text-attributed graphs demonstrate that GLIP consistently outperforms classical graph pretraining methods and recent graph--LLM frameworks under semi-supervised, few-shot, and graph reasoning settings. In future work, we plan to extend GLIP to more complex graph domains, such as heterogeneous graphs, symbolic graphs, and dynamic relational systems.

\clearpage
\newpage

\bibliography{refs,newrefs}
\bibliographystyle{ACM-Reference-Format}
\balance

\newpage
\appendix

\renewcommand{\theequation}{A.\arabic{equation}}
\setcounter{equation}{0}  

\section{Algorithms and Proofs}
In this section, we provide the pseudocode of Algorithm~\ref{alg:greedy} as well as the detailed proofs of our three Theorems.

\subsection{Pseudocode of Algorithm~\ref{alg:greedy}}

 \begin{algorithm}[h]
\caption{$\textsc{GreedyPatchSelect}(\mathcal{G}, \mathcal{C}, k, X, \lambda)$}
\label{alg:greedy}
\Input{
Graph $\mathcal{G}=(\mathcal{V},\mathcal{E})$ with $N$ nodes; 
Candidate patches $\mathcal{C}=\{S_1,\dots,S_t\}$; 
Node features $\XX=[\xx_1,\dots,\xx_N]^\top \in \mathbb{R}^{N\times F}$; 
Budget $k$; 
Weight $\lambda$.
}
\Output{Selected set $U\subseteq\mathcal{C}$ with $|U|=k$.}

\textbf{Initialize:} $U\leftarrow\emptyset$; 
$C_U\leftarrow\emptyset$; 
$M_U(v)\leftarrow 0,\ \forall v\in\mathcal{V}$;

\textbf{Precompute:} For each candidate patch $S_j$ and node $v\in\mathcal{V}$, 
$m_{S_j}(v)\leftarrow\max_{u\in S_j}\langle \xx_v,\xx_u\rangle$;

\For{$r=1$ \KwTo $k$}{
  \For{each $S_j \in \mathcal{C}\setminus U$}{
    $\Delta_{\text{structure}}(S_j)\leftarrow \dfrac{|C_U\cup S_j|-|C_U|}{N}$;

    $M'(v)\leftarrow\max\{M_U(v),m_{S_j}(v)\},\ \forall v\in\mathcal{V}$;

    $\Delta_{\text{feature}}(S_j)\leftarrow\dfrac{1}{N}\sum_{v=1}^N M'(v)-\dfrac{1}{N}\sum_{v=1}^N M_U(v)$;

    $\Delta F(S_j)\leftarrow \lambda\,\Delta_{\text{structure}}(S_j)+(1-\lambda)\,\Delta_{\text{feature}}(S_j)$;
  }

  $S^\star\leftarrow\arg\max_{S_j\in \mathcal{C}\setminus U}\Delta F(S_j)$;

  $U\leftarrow U\cup\{S^\star\}$; 
  $C_U\leftarrow C_U\cup S^\star$; \\
  $M_U(v)\leftarrow \max\{M_U(v),m_{S^\star}(v)\},\ \forall v\in\mathcal{V}$;
}

\Return $U$
\end{algorithm}

\subsection{The proof of Theorem~\ref{th-NP}}



\begin{proof}
We reduce from the classical Maximum Coverage problem, which is NP-hard~\cite{feige1998threshold}.  
Given a Maximum Coverage instance with universe $V=\{v_1, v_2, \dots, v_N\}$, a subset family $\mathcal{S}=\{S_1, \dots, S_t\}$, and a budget $k$, we construct a Patch Selection instance as follows:  
(i) create a graph $\mathcal{G}$ with $N$ nodes, one for each element in {$V$};  
(ii) let the candidate patches be $\mathcal{C}=\{S_1,\dots,S_t\}$;  
(iii) set $\lambda=1$ and keep the same budget $k$.  

In this case, the Patch Selection objective reduces to
\[
\max_{U\subseteq \mathcal{C},\,|U|=k} f_{\mathrm{structure}}(U)
= {\frac{1}{N}\Big|\bigcup_{S\in U} S\Big|},
\]
which is exactly the Maximum Coverage objective up to a constant factor $1/N$.  
Thus, solving Patch Selection solves Maximum Coverage, proving NP-hardness.
\end{proof}

\subsection{The proof of Theorem~\ref{th-mo_and_sub}}
 
\begin{proof}
\emph{Monotonicity.} 
Both $f_{\mathrm{structure}}$ and $f_{\mathrm{feature}}$ are monotone nondecreasing.  
For $f_{\mathrm{structure}}$, adding a new patch can only increase (or leave unchanged) the number of covered nodes.  
For $f_{\mathrm{feature}}$, enlarging the selected set cannot reduce any node’s maximum similarity to the set.  
Since $F(U)$ is a nonnegative linear combination of these two terms, it is also monotone.
 
\emph{Submodularity.}  
Let $A,B\subseteq \mathcal{C}$ with $A\subseteq B$, and let $S_i\in \mathcal{C}\setminus B$.  
We need to show
\[
F(A\cup\{S_i\}) - F(A) \;\ge\; F(B\cup\{S_i\}) - F(B).
\]

\textbf{(1) Structural term.}  
By definition, {let $C_U=\bigcup_{S\in U}S$ and}
\[
{f_{\mathrm{structure}}(U) = \frac{|C_U|}{N}.}
\]
Hence
\[
f_{\mathrm{structure}}(A\cup\{S_i\}) - f_{\mathrm{structure}}(A) 
= {\frac{|S_i\setminus C_A|}{N}},
\]
and
\[
f_{\mathrm{structure}}(B\cup\{S_i\}) - f_{\mathrm{structure}}(B) 
= {\frac{|S_i\setminus C_B|}{N}}.
\]
Since $A\subseteq B$, it follows that {$C_A\subseteq C_B$}, giving
\[
f_{\mathrm{structure}}(A\cup\{S_i\}) - f_{\mathrm{structure}}(A) 
\;\ge\; 
f_{\mathrm{structure}}(B\cup\{S_i\}) - f_{\mathrm{structure}}(B).
\]

\textbf{(2) Feature term.}  
By definition,
\[
{f_{\mathrm{feature}}(U) = \frac{1}{N}\sum_{v=1}^N \max_{u\in C_U} \cos(\xx_v,\xx_u).}
\]
Let 
\[
M_U(v) := {\max_{u\in C_U}} \cos(\xx_v,\xx_u), 
\quad 
m_{S_i}(v) := \max_{u\in S_i}\cos(\xx_v,\xx_u).
\]
Then
\[
f_{\mathrm{feature}}(A\cup\{S_i\}) - f_{\mathrm{feature}}(A) 
= \frac{1}{N}\sum_{v=1}^N \max\{m_{S_i}(v)-M_A(v),0\},
\]
and
\[
f_{\mathrm{feature}}(B\cup\{S_i\}) - f_{\mathrm{feature}}(B) 
= \frac{1}{N}\sum_{v=1}^N \max\{m_{S_i}(v)-M_B(v),0\}.
\]
Since $A\subseteq B$, we have {$C_A\subseteq C_B$ and hence} $M_A(v)\le M_B(v)$ for all $v$, which implies
\[
\max\{m_{S_i}(v)-M_A(v),0\} \;\ge\; \max\{m_{S_i}(v)-M_B(v),0\}.
\]
Summing over all $v$ yields
\[
f_{\mathrm{feature}}(A\cup\{S_i\}) - f_{\mathrm{feature}}(A) 
\;\ge\;
f_{\mathrm{feature}}(B\cup\{S_i\}) - f_{\mathrm{feature}}(B).
\]

\textbf{(3) Conclusion.}  
Both $f_{\mathrm{structure}}$ and $f_{\mathrm{feature}}$ are submodular.  
Since $F(U)$ is a nonnegative linear combination of these terms, $F(U)$ is also submodular.
\end{proof}

\subsection{The proof of Theorem~\ref{th-diffusion}}

\begin{proof}
Stacking \eqref{FJeq} over all nodes gives the matrix recurrence
\begin{equation}
\ZZ(q+1) \;=\; (\II+\alpha \DD)^{-1}\big(\XX+\alpha \AA\,\ZZ(q)\big)
= \JJ\,\ZZ(q)+\BB,
\end{equation}
where $\JJ=(\II+\alpha\DD)^{-1}(\alpha\AA)$ and $\BB=(\II+\alpha\DD)^{-1}\XX$.

For each row $i$, the $\ell_\infty$-row sum of $\JJ$ is
\[
\sum_j |\JJ_{ij}| \;=\; \frac{\alpha d_i}{1+\alpha d_i} \;<\; 1,
\]
hence $\|\JJ\|_\infty<1$ and therefore $\rho(\JJ)<1$. This implies the affine iteration converges linearly to a unique fixed point for any initialization.

At convergence, $\ZZ^\star=\JJ\ZZ^\star+\BB$, i.e.,
\[
(\II-\JJ)\ZZ^\star=\BB.
\]
Multiplying both sides by $(\II+\alpha\DD)$ yields
\[
\big[(\II+\alpha\DD)-\alpha\AA\big]\ZZ^\star=\XX,
\]
which simplifies to $(\II+\alpha\LL)\ZZ^\star=\XX$. Thus
\[
\ZZ^\star=(\II+\alpha\LL)^{-1}\XX,
\]
which finishes the proof.
\end{proof}

\begin{table*}[t!]
\centering
\caption{Training time comparison (seconds per epoch) in the format of $T_{\text{pre}} + T_{\text{fine}}$, where $T_{\text{pre}}$ denotes pretraining time and $T_{\text{fine}}$ denotes fine-tuning time. Methods without pretraining are reported as $0 + T_{\text{fine}}$.}
\label{tab:time_cost}
\begin{tabular}{lccccccc}
\hline
Method & MUTAG & PROTEINS & REDDIT-BINARY & IMDB-BINARY & BBBP & BACE & E-Commerce \\
\hline
GraphGPT & - & - & - & - & 32.1+7.3 & 20.2+6.0 & 15.2+3.5 \\
LLaGA   & 0+1.1 & 0+4.6 & 0+6.3 & 0+3.5 & 0+7.1 & 0+6.3 & 0+2.3 \\
TEA-GLM & 1.2+1.2 & 2.3+4.4 & 2.1+6.3 & 1.9+3.4 & 1.2+6.9 & 1.5+6.2 & 1.2+2.1 \\
GLIP    & 2.6+1.1 & 25.2+4.2 & 38.1+6.1 & 20.2+3.2 & 46.1+7.2 & 30.4+6.2 & 18.6+2.4 \\
\hline
\end{tabular}
\end{table*}

\section{Implementation Details}

We now present the implementation details and parameter settings used in our experiments. For GLIP, the parameter $\lambda$ in Theorem~\ref{th-mo_and_sub} is set to 0.5. We use Algorithm~\ref{alg:greedy} and construct the candidate patch set $\mathcal{C}$ from the one-hop neighbors of the top 20 nodes ranked by degree, and we choose $k = 5$. The diffusion operator is computed using the conjugate gradient method to efficiently solve the associated linear systems. 

During the pretraining stage, we adopt the default augmentation settings: $\texttt{aug\_ratio} = 0.2$, $\texttt{aug1} = \text{"permE"}$, and $\texttt{aug2} = \text{"maskN"}$. The model is trained with a batch size of 16, learning rate of 0,001, for 100 epochs, with the random seed set to 42. The graph encoder is a two-layer GCN with hidden dimension 64 and output dimension 64. In the semi-supervised fine-tuning stage, we set the batch size to 32 and the learning rate to 0.001. Each experiment is repeated with five different random seeds $\{0, 1, 2, 3, 4\}$, and the average performance across these runs is reported. For graph classification tasks using LLMs such as Mistral and Qwen, we employ the model's `generate` method for text generation, with `max\_new\_tokens` set to 3 to constrain output length. Since `do\_sample` is disabled (set to False), deterministic output is ensured, eliminating the need for a temperature parameter to control uncertainty.

For other baselines, we use the default hyperparameter settings when available; otherwise, we perform a grid search to determine the optimal values.

\section{Limitations}
Our study focuses primarily on graph-level tasks, and we have not experimented with node-level or edge-level settings. The proposed patch selection and diffusion modules are specifically tailored for capturing holistic graph semantics rather than fine-grained local predictions. However, the joint pretraining strategy itself is general and can be extended to node or edge representation learning with suitable adaptations.

\section{Additional Experiment Results}
In this section, we present additional experimental results.

\subsection{Training Time Analysis}
Table~\ref{tab:time_cost} reports the per-epoch training time of different Graph-LLM methods, decomposed into the pretraining stage and the fine-tuning stage. As expected, methods without a pretraining phase (e.g., LLaGA) incur zero pretraining cost and only spend time on task-specific fine-tuning, while pretraining-based methods introduce additional overhead in the first stage. TEA-GLM requires less pretraining time since its pretraining stage does not involve the LLM backbone. In contrast, GLIP requires a longer pretraining time compared to GraphGPT, due to its joint optimization of the graph encoder and projection modules, as well as the additional patch selection and diffusion operations.

However, the fine-tuning time of GLIP is comparable to or only slightly higher than that of other baselines across all datasets, indicating that the extra computational cost is mainly concentrated in the one-time pretraining stage. In practical usage, the pretrained GLIP model can be directly reused for different downstream tasks (e.g., classification and reasoning on the E-Commerce dataset) without repeating the pretraining process, therefore the overall pretraining cost is acceptable in practice.

\subsection{Frozen vs. Fine-tuned LLMs for Graph Reasoning}

Table~\ref{tb-bert} compares different training strategies of GLIP on the graph reasoning task, including training without pretraining, keeping the LLM backbone frozen, applying parameter-efficient fine-tuning with LoRA, and full-parameter fine-tuning. These variants allow us to analyze the trade-offs between performance and computational cost.

As shown in the table, removing the pretraining stage leads to a significant performance drop, confirming the necessity of joint pretraining. Keeping the LLM frozen already achieves strong results, demonstrating that GLIP can effectively leverage a fixed LLM through graph token alignment. Applying LoRA further improves performance with only a modest increase in training cost. Full-parameter fine-tuning achieves the best overall performance, but at a substantially higher computational expense.

In our experiments, full fine-tuning requires four 80G H100 GPUs and incurs significantly larger time and memory consumption, making it much less practical in real-world scenarios. In contrast, the frozen and LoRA-based variants provide a much better balance between performance and efficiency, highlighting the practical advantage of GLIP under realistic computational budgets.

\begin{table*}[t!]
\centering
\fontsize{8.5}{12}\selectfont  
\caption{Graph reasoning performance of different GLIP training variants compared with GPT-4o outputs.}
\label{tb-bert}
\begin{tabular}{ccccccccccc}\hline
 &
  \multicolumn{3}{c}{BERT Score} &
  \multicolumn{4}{c}{BLEU} &
  \multicolumn{3}{c}{ROUGE} \\
\multirow{-2}{*}{Model} &
  Precision &
  Recall &
  F1 &
  BLEU-1 &
  BLEU-2 &
  BLEU-3 &
  BLEU-4 &
  ROUGE-1 &
  ROUGE-2 &
  ROUGE-L \\ \hline

GLIP (w/o Pretrain) &
 80.12 &
  75.03 &
  77.54 &
  47.31 &
  33.64 &
  26.11 &
  19.32 &
  50.27 &
  24.84 &
  49.91 \\

GLIP (LLM Frozen) &
  82.71 &
  81.52 &
  82.11 &
  54.71 &
  41.21 &
  30.79 &
  24.68 &
  61.82 &
  34.21 &
  59.78 \\

GLIP (LoRA) &
  83.45 &
  82.36 &
  82.90 &
  55.38 &
  42.07 &
  31.54 &
  25.41 &
  62.73 &
  35.02 &
  60.81 \\

GLIP (Full Finetune) &
  85.12 &
  84.03 &
  84.57 &
  57.46 &
  44.18 &
  33.12 &
  27.05 &
  64.89 &
  37.26 &
  62.34 \\ \hline

\end{tabular}
\end{table*}

\section{Additional Classification Results}
This section reports additional classification results, including the F1 scores for semi-supervised learning and 5-shot settings, as well as the accuracy and F1 scores under the 3-shot scenario.

\begin{table*}[t]
\centering
\caption{\textbf{Performance comparison under semi-supervised settings  with F1 Score (\%) reported.} 
The \colorbox[HTML]{FFDEC7}{\textbf{best}} and \colorbox[HTML]{FFF2E9}{second-best} results are highlighted. Results are averaged over five runs. }
\label{tb-semi-f1}
\fontsize{8.5}{13}\selectfont  
\setlength{\tabcolsep}{4pt} 
\begin{tabular}{cc|ccccc|cccc}
\hline
\multicolumn{2}{c|}{} &
  \multicolumn{5}{c|}{\textbf{Feature Graphs}} &
  \multicolumn{4}{c}{\textbf{Text-Attributed Graphs}} \\
\multicolumn{2}{c|}{\multirow{-2}{*}{\textbf{\begin{tabular}[c]{@{}c@{}}F1 Score\\ Semi-Supervised\end{tabular}}}} &
  \textbf{MUTAG} &
  \textbf{PROTEINS} &
  \textbf{REDDIT} &
  \textbf{IMDB} &
  \textbf{Average} &
  \textbf{BBBP} &
  \textbf{BACE} &
  \textbf{E-Commerce} &
  \textbf{Average} \\ \hline
 &
  GraphMAE &
  $44.50_{\pm 9.31}$ &
  $44.77_{\pm 7.54}$ &
  $80.10_{\pm 4.68}$ &
  $42.84_{\pm 9.58}$ &
  $53.05$ &
  $43.35_{\pm 0.01}$ &
  $36.07_{\pm 1.71}$ &
  $83.37_{\pm 1.23}$ &
  $54.26$ \\
 &
  GraphCL &
  $61.45_{\pm 9.83}$ &
  $61.69_{\pm 4.08}$ &
  $86.19_{\pm 2.02}$ &
  $60.47_{\pm 2.65}$ &
  $67.45$ &
  $48.01_{\pm 8.00}$ &
  $53.12_{\pm 8.37}$ &
  $83.12_{\pm 2.03}$ &
  $61.42$ \\
 &
  DGI &
  $51.70_{\pm 9.53}$ &
  $46.87_{\pm 9.77}$ &
  $84.91_{\pm 2.20}$ &
  $51.03_{\pm 7.17}$ &
  $58.63$ &
  $43.35_{\pm 0.00}$ &
  $47.65_{\pm 9.70}$ &
  $82.09_{\pm 2.04}$ &
  $57.70$ \\
\multirow{-4}{*}{\textbf{\begin{tabular}[c]{@{}c@{}}Graph\\ Pretrain\\ Method\end{tabular}}} &
  SimGRACE &
  $47.77_{\pm 9.74}$ &
  $46.77_{\pm 7.36}$ &
  $83.31_{\pm 2.37}$ &
  $60.81_{\pm 3.34}$ &
  $59.67$ &
  $43.31_{\pm 0.08}$ &
  $42.75_{\pm 9.56}$ &
  $83.16_{\pm 1.28}$ &
  $56.41$ \\ \hline
 &
  GraphGPT-Mistral &
  - &
  - &
  - &
  - &
  - &
  $52.46_{\pm 9.77}$ &
  $45.13_{\pm 9.24}$ &
  $80.21_{\pm 2.33}$ &
  $59.27$ \\
 &
  GraphGPT-Qwen &
  - &
  - &
  - &
  - &
  - &
  $51.32_{\pm 9.23}$ &
  $46.71_{\pm 9.24}$ &
  $81.62_{\pm 2.13}$ &
  $59.88$ \\
 &
  LLaGA-Mistral &
  $50.87_{\pm 9.17}$ &
  $39.57_{\pm 4.99}$ &
  $75.74_{\pm 3.58}$ &
  $43.20_{\pm 9.49}$ &
  $52.35$ &
  $43.35_{\pm 0.00}$ &
  $42.85_{\pm 9.56}$ &
  $81.74_{\pm 4.43}$ &
  $55.98$ \\
 &
  LLaGA-Qwen &
  $51.64_{\pm 9.43}$ &
  $54.65_{\pm 6.95}$ &
  $77.69_{\pm 2.44}$ &
  $54.16_{\pm 2.24}$ &
  $59.54$ &
  $44.85_{\pm 3.24}$ &
  $38.98_{\pm 6.54}$ &
  $83.80_{\pm 1.60}$ &
  $55.88$ \\
 &
  TEA-GLM-Mistral &
  $63.68_{\pm 2.80}$ &
  $60.42_{\pm 9.77}$ &
  \cellcolor[HTML]{FFF2E9}$86.54_{\pm 2.40}$ &
  $61.34_{\pm 3.74}$ &
  $68.00$ &
  $65.12_{\pm 1.30}$ &
  \cellcolor[HTML]{FFF2E9}$59.66_{\pm 4.57}$ &
  $84.01_{\pm 2.12}$ &
  $69.60$ \\
\multirow{-6}{*}{\textbf{\begin{tabular}[c]{@{}c@{}}Graph-\\ LLM\\ Method\end{tabular}}} &
  TEA-GLM-Qwen &
  $51.82_{\pm 9.96}$ &
  $63.12_{\pm 3.83}$ &
  $84.87_{\pm 4.71}$ &
  $60.76_{\pm 2.18}$ &
  $65.14$ &
  $60.15_{\pm 9.68}$ &
  $58.07_{\pm 5.63}$ &
  $84.02_{\pm 1.54}$ &
  $67.41$ \\ \hline
 &
  GLIP-Mistral &
  \cellcolor[HTML]{FFF2E9}$64.35_{\pm 3.57}$ &
  \cellcolor[HTML]{FFDEC7}$65.99_{\pm 2.42}$ &
  \cellcolor[HTML]{FFDEC7}$86.70_{\pm 1.24}$ &
  \cellcolor[HTML]{FFDEC7}$62.23_{\pm 1.02}$ &
  \cellcolor[HTML]{FFDEC7}$69.82$ &
  \cellcolor[HTML]{FFDEC7}$68.04_{\pm 1.30}$ &
  \cellcolor[HTML]{FFDEC7}$62.42_{\pm 1.80}$ &
  \cellcolor[HTML]{FFDEC7}$86.36_{\pm 1.80}$ &
  \cellcolor[HTML]{FFDEC7}$72.27$ \\
\multirow{-2}{*}{\textbf{Ours}} &
  GLIP-Qwen &
  \cellcolor[HTML]{FFDEC7}$67.43_{\pm 6.78}$ &
  \cellcolor[HTML]{FFF2E9}$64.09_{\pm 4.38}$ &
  $85.58_{\pm 1.77}$ &
  \cellcolor[HTML]{FFF2E9}$61.60_{\pm 5.45}$ &
  \cellcolor[HTML]{FFF2E9}$69.68$ &
  \cellcolor[HTML]{FFF2E9}$67.55_{\pm 1.53}$ &
  $58.84_{\pm 2.88}$ &
  \cellcolor[HTML]{FFF2E9}$84.16_{\pm 1.04}$ &
  \cellcolor[HTML]{FFF2E9}$70.18$ \\ \hline
\end{tabular}
\end{table*}

\begin{table*}[htbp!]
\centering
\fontsize{8.5}{12.5}\selectfont  
\setlength{\tabcolsep}{4pt} 
\caption{\textbf{Performance comparison under 5-shot settings  with Accuracy (\%) reported.} 
The \colorbox[HTML]{FFDEC7}{\textbf{best}} and \colorbox[HTML]{FFF2E9}{second-best} results are highlighted. Results are averaged over five runs. }
\label{tb-5-accuracy}
\begin{tabular}{cc|ccccc|cccc}
\hline
\multicolumn{2}{c|}{} &
  \multicolumn{5}{c|}{\textbf{Feature Graphs}} &
  \multicolumn{4}{c}{\textbf{Text-Attributed Graphs}} \\
\multicolumn{2}{c|}{\multirow{-2}{*}{\textbf{\begin{tabular}[c]{@{}c@{}}Accuracy\\ 5-Shot\end{tabular}}}} &
  \textbf{MUTAG} &
  \textbf{PROTEINS} &
  \textbf{REDDIT} &
  \textbf{IMDB} &
  \textbf{Average} &
  \textbf{BBBP} &
  \textbf{BACE} &
  \textbf{E-Com} &
  \textbf{Average} \\ \hline
 &
  GraphMAE &
  $69.34_{\pm 3.26}$ &
  $59.79_{\pm 0.17}$ &
  $64.17_{\pm 8.92}$ &
  $50.45_{\pm 3.15}$ &
  $60.94$ &
  $72.50_{\pm 0.28}$ &
  $54.36_{\pm 0.03}$ &
  $72.35_{\pm 2.85}$ &
  $66.40$ \\
 &
  GraphCL &
  $73.29_{\pm 6.07}$ &
  $62.05_{\pm 4.45}$ &
  $73.95_{\pm 5.96}$ &
  $53.07_{\pm 6.53}$ &
  $65.59$ &
  $63.83_{\pm 8.26}$ &
  $53.60_{\pm 3.25}$ &
  $73.06_{\pm 9.05}$ &
  $63.50$ \\
 &
  DGI &
  $70.53_{\pm 3.93}$ &
  $61.07_{\pm 2.57}$ &
  $68.55_{\pm 5.60}$ &
  $52.38_{\pm 2.70}$ &
  $63.13$ &
  \cellcolor[HTML]{FFF2E9}$72.54_{\pm 0.62}$ &
  $54.76_{\pm 0.91}$ &
  $73.71_{\pm 4.79}$ &
  $67.00$ \\
\multirow{-4}{*}{\textbf{\begin{tabular}[c]{@{}c@{}}Graph\\ Pretrain\\ Method\end{tabular}}} &
  SimGRACE &
  $72.37_{\pm 2.39}$ &
  $62.39_{\pm 2.71}$ &
  $72.91_{\pm 6.76}$ &
  $50.64_{\pm 4.39}$ &
  $64.58$ &
  $72.14_{\pm 2.04}$ &
  $54.38_{\pm 0.00}$ &
  $72.47_{\pm 5.47}$ &
  $66.33$ \\ \hline
 &
  GraphGPT-Mistral &
  - &
  - &
  - &
  - &
  - &
  $70.16_{\pm 6.37}$ &
  $55.23_{\pm 0.98}$ &
  $70.26_{\pm 3.27}$ &
  $65.22$ \\
 &
  GraphGPT-Qwen &
  - &
  - &
  - &
  - &
  - &
  $69.32_{\pm 1.20}$ &
  $54.18_{\pm 1.08}$ &
  $71.09_{\pm 2.41}$ &
  $64.86$ \\
 &
  LLaGA-Mistral &
  $74.83_{\pm 5.36}$ &
  $60.22_{\pm 0.69}$ &
  $65.39_{\pm 5.32}$ &
  $50.55_{\pm 2.40}$ &
  $62.75$ &
  $72.49_{\pm 0.35}$ &
  $54.95_{\pm 0.80}$ &
  $72.21_{\pm 6.54}$ &
  $66.55$ \\
 &
  LLaGA-Qwen &
  $69.74_{\pm 3.86}$ &
  $58.46_{\pm 9.57}$ &
  $68.14_{\pm 3.95}$ &
  $50.19_{\pm 1.83}$ &
  $61.63$ &
  $71.57_{\pm 0.16}$ &
  $54.07_{\pm 1.24}$ &
  $70.55_{\pm 8.12}$ &
  $65.40$ \\
 &
  TEA-GLM-Mistral &
  $67.63_{\pm 15.43}$ &
  \cellcolor[HTML]{FFF2E9}$63.05_{\pm 5.99}$ &
  $74.24_{\pm 6.79}$ &
  $52.10_{\pm 5.31}$ &
  $64.26$ &
  $66.63_{\pm 7.39}$ &
  $55.18_{\pm 1.92}$ &
  $71.21_{\pm 2.13}$ &
  $64.34$ \\
\multirow{-6}{*}{\textbf{\begin{tabular}[c]{@{}c@{}}Graph-\\ LLM\\ Method\end{tabular}}} &
  TEA-GLM-Qwen &
  $68.81_{\pm 7.17}$ &
  $56.99_{\pm 11.62}$ &
  $74.39_{\pm 6.84}$ &
  \cellcolor[HTML]{FFF2E9}$53.28_{\pm 6.27}$ &
  $63.37$ &
  $70.49_{\pm 5.52}$ &
  $51.65_{\pm 4.97}$ &
  $73.88_{\pm 5.94}$ &
  $65.34$ \\ \hline
 &
  GLIP-Mistral &
  \cellcolor[HTML]{FFDEC7}$76.05_{\pm 2.44}$ &
  $62.66_{\pm 8.85}$ &
  \cellcolor[HTML]{FFDEC7}$75.12_{\pm 3.20}$ &
  $53.09_{\pm 7.53}$ &
  \cellcolor[HTML]{FFF2E9}$66.73$ &
  $72.16_{\pm 1.66}$ &
  \cellcolor[HTML]{FFDEC7}$56.17_{\pm 2.21}$ &
  \cellcolor[HTML]{FFF2E9}$76.60_{\pm 1.82}$ &
  \cellcolor[HTML]{FFF2E9}$68.31$ \\
\multirow{-2}{*}{\textbf{Ours}} &
  GLIP-Qwen &
  \cellcolor[HTML]{FFF2E9}$75.66_{\pm 3.98}$ &
  \cellcolor[HTML]{FFDEC7}$64.07_{\pm 5.57}$ &
  \cellcolor[HTML]{FFF2E9}$75.08_{\pm 3.52}$ &
  \cellcolor[HTML]{FFDEC7}$54.51_{\pm 6.08}$ &
  \cellcolor[HTML]{FFDEC7}$67.33$ &
  \cellcolor[HTML]{FFDEC7}$74.85_{\pm 1.14}$ &
  \cellcolor[HTML]{FFF2E9}$55.14_{\pm 1.66}$ &
  \cellcolor[HTML]{FFDEC7}$77.04_{\pm 5.93}$ &
  \cellcolor[HTML]{FFDEC7}$69.01$ \\ \hline
\end{tabular}
\end{table*}

\begin{table*}[t]
\centering
\fontsize{8.5}{13}\selectfont  
\setlength{\tabcolsep}{4pt} 
\caption{\textbf{Performance comparison under 5-shot settings  with F1 score (\%) reported.} 
The \colorbox[HTML]{FFDEC7}{\textbf{best}} and \colorbox[HTML]{FFF2E9}{second-best} results are highlighted. Results are averaged over five runs. }
\label{tb-5-F1}
\begin{tabular}{cc|ccccc|cccc}
\hline
\multicolumn{2}{c|}{} &
  \multicolumn{5}{c|}{\textbf{Feature Graphs}} &
  \multicolumn{4}{c}{\textbf{Text-Attributed Graphs}} \\
\multicolumn{2}{c|}{\multirow{-2}{*}{\textbf{\begin{tabular}[c]{@{}c@{}}F1 Score\\ 5-Shot\end{tabular}}}} &
  \textbf{MUTAG} &
  \textbf{PROTEINS} &
  \textbf{REDDIT} &
  \textbf{IMDB} &
  \textbf{Average} &
  \textbf{BBBP} &
  \textbf{BACE} &
  \textbf{E-Com} &
  \textbf{Average} \\ \hline
 &
  GraphMAE &
  $59.75_{\pm 9.10}$ &
  $37.70_{\pm 0.65}$ &
  $59.40_{\pm 9.67}$ &
  $42.84_{\pm 4.22}$ &
  $49.92$ &
  $43.92_{\pm 1.06}$ &
  $35.22_{\pm 0.02}$ &
  $75.28_{\pm 2.95}$ &
  $51.47$ \\
 &
  GraphCL &
  $70.00_{\pm 5.60}$ &
  $58.58_{\pm 5.61}$ &
  $73.72_{\pm 6.00}$ &
  $50.96_{\pm 7.46}$ &
  $63.32$ &
  $54.86_{\pm 5.84}$ &
  $49.23_{\pm 7.48}$ &
  $72.73_{\pm 9.35}$ &
  $58.94$ \\
 &
  DGI &
  $65.43_{\pm 7.61}$ &
  $49.28_{\pm 9.81}$ &
  $67.53_{\pm 6.68}$ &
  $43.57_{\pm 7.65}$ &
  $56.45$ &
  $49.08_{\pm 6.31}$ &
  $43.12_{\pm 9.70}$ &
  $74.23_{\pm 5.45}$ &
  $55.48$ \\
\multirow{-4}{*}{\textbf{\begin{tabular}[c]{@{}c@{}}Graph\\ Pretrain\\ Method\end{tabular}}} &
  SimGRACE &
  $69.29_{\pm 4.67}$ &
  $49.31_{\pm 8.85}$ &
  $71.66_{\pm 8.47}$ &
  $45.04_{\pm 3.28}$ &
  $58.83$ &
  $52.39_{\pm 7.40}$ &
  $35.23_{\pm 0.00}$ &
  $73.30_{\pm 5.55}$ &
  $53.64$ \\ \hline
 &
  GraphGPT-Mistral &
  - &
  - &
  - &
  - &
  - &
  $56.67_{\pm 7.06}$ &
  $42.86_{\pm 9.43}$ &
  $72.92_{\pm 3.62}$ &
  $57.48$ \\
 &
  GraphGPT-Qwen &
  - &
  - &
  - &
  - &
  - &
  $54.67_{\pm 7.06}$ &
  $45.86_{\pm 9.43}$ &
  $72.73_{\pm 1.52}$ &
  $57.75$ \\
 &
  LLaGA-Mistral &
  $65.85_{\pm 9.20}$ &
  $43.52_{\pm 6.32}$ &
  $63.45_{\pm 7.68}$ &
  $41.82_{\pm 9.79}$ &
  $53.66$ &
  $43.34_{\pm 0.11}$ &
  $39.95_{\pm 6.47}$ &
  $72.64_{\pm 7.45}$ &
  $51.98$ \\
 &
  LLaGA-Qwen &
  $57.68_{\pm 9.53}$ &
  $47.52_{\pm 9.93}$ &
  $67.65_{\pm 4.50}$ &
  $38.67_{\pm 6.23}$ &
  $52.88$ &
  $43.37_{\pm 0.05}$ &
  $40.46_{\pm 7.94}$ &
  $68.62_{\pm 8.08}$ &
  $50.82$ \\
 &
  TEA-GLM-Mistral &
  $65.21_{\pm 9.29}$ &
  $56.03_{\pm 8.18}$ &
  $72.03_{\pm 6.92}$ &
  $46.70_{\pm 5.10}$ &
  $59.99$ &
  $57.32_{\pm 3.68}$ &
  $48.78_{\pm 8.28}$ &
  $69.62_{\pm 9.90}$ &
  $58.91$ \\
\multirow{-6}{*}{\textbf{\begin{tabular}[c]{@{}c@{}}Graph-\\ LLM\\ Method\end{tabular}}} &
  TEA-GLM-Qwen &
  $61.07_{\pm 9.97}$ &
  $54.89_{\pm 8.75}$ &
  $72.29_{\pm 6.78}$ &
  \cellcolor[HTML]{FFF2E9}$52.09_{\pm 7.34}$ &
  $60.09$ &
  $55.21_{\pm 7.14}$ &
  \cellcolor[HTML]{FFF2E9}$49.55_{\pm 6.01}$ &
  $73.15_{\pm 6.88}$ &
  $59.30$ \\ \hline
 &
  GLIP-Mistral &
  \cellcolor[HTML]{FFF2E9}$73.37_{\pm 2.32}$ &
  \cellcolor[HTML]{FFDEC7}$60.57_{\pm 8.21}$ &
  \cellcolor[HTML]{FFDEC7}$75.18_{\pm 3.57}$ &
  $51.56_{\pm 8.09}$ &
  \cellcolor[HTML]{FFF2E9}$65.17$ &
  \cellcolor[HTML]{FFDEC7}$59.81_{\pm 4.05}$ &
  \cellcolor[HTML]{FFDEC7}$51.00_{\pm 7.76}$ &
  \cellcolor[HTML]{FFDEC7}$77.82_{\pm 6.32}$ &
  \cellcolor[HTML]{FFDEC7}$62.88$ \\
\multirow{-2}{*}{\textbf{Ours}} &
  GLIP-Qwen &
  \cellcolor[HTML]{FFDEC7}$72.67_{\pm 5.34}$ &
  \cellcolor[HTML]{FFF2E9}$60.60_{\pm 6.28}$ &
  \cellcolor[HTML]{FFF2E9}$74.75_{\pm 3.66}$ &
  \cellcolor[HTML]{FFDEC7}$53.90_{\pm 5.25}$ &
  \cellcolor[HTML]{FFDEC7}$65.48$ &
  \cellcolor[HTML]{FFF2E9}$58.32_{\pm 5.53}$ &
  $47.58_{\pm 5.27}$ &
  \cellcolor[HTML]{FFF2E9}$77.67_{\pm 5.76}$ &
  \cellcolor[HTML]{FFF2E9}$61.19$ \\ \hline
\end{tabular}
\end{table*}

\begin{table*}[htbp!]
\centering
\fontsize{8.5}{13}\selectfont  
\setlength{\tabcolsep}{4pt} 
\caption{\textbf{Performance comparison under 3-shot settings  with Accuracy (\%) reported.} 
The \colorbox[HTML]{FFDEC7}{\textbf{best}} and \colorbox[HTML]{FFF2E9}{second-best} results are highlighted. Results are averaged over five runs. }
\label{tb-3-acc}
\begin{tabular}{cc|ccccc|cccc}
\hline
\multicolumn{2}{c|}{} &
  \multicolumn{5}{c|}{\textbf{Feature Graphs}} &
  \multicolumn{4}{c}{\textbf{Text-Attributed Graphs}} \\
\multicolumn{2}{c|}{\multirow{-2}{*}{\textbf{\begin{tabular}[c]{@{}c@{}}Accuracy\\ 3-Shot\end{tabular}}}} &
  \textbf{MUTAG} &
  \textbf{PROTEINS} &
  \textbf{REDDIT} &
  \textbf{IMDB} &
  \textbf{Average} &
  \textbf{BBBP} &
  \textbf{BACE} &
  \textbf{E-Commerce} &
  \textbf{Average} \\ \hline
 &
  GraphMAE &
  $69.42_{\pm 2.10}$ &
  $59.98_{\pm 0.72}$ &
  $62.41_{\pm 7.47}$ &
  $51.72_{\pm 3.02}$ &
  $60.88$ &
  $70.92_{\pm 1.08}$ &
  $54.31_{\pm 0.03}$ &
  \cellcolor[HTML]{FFF2E9}$73.86_{\pm 3.15}$ &
  $66.36$ \\
 &
  GraphCL &
  $71.25_{\pm 2.66}$ &
  $61.21_{\pm 2.01}$ &
  $70.53_{\pm 3.22}$ &
  $51.43_{\pm 3.96}$ &
  $63.61$ &
  $64.92_{\pm 8.83}$ &
  $54.02_{\pm 2.69}$ &
  $69.70_{\pm 7.99}$ &
  $62.88$ \\
 &
  DGI &
  $72.90_{\pm 5.94}$ &
  $62.55_{\pm 3.16}$ &
  $68.48_{\pm 6.18}$ &
  $50.91_{\pm 2.14}$ &
  $63.71$ &
  $70.13_{\pm 0.39}$ &
  $54.08_{\pm 0.85}$ &
  $72.53_{\pm 5.55}$ &
  $65.58$ \\
\multirow{-4}{*}{\textbf{\begin{tabular}[c]{@{}c@{}}Graph\\ Pretrain\\ Method\end{tabular}}} &
  SimGRACE &
  \cellcolor[HTML]{FFF2E9}$74.32_{\pm 4.99}$ &
  $62.08_{\pm 1.94}$ &
  $71.23_{\pm 5.42}$ &
  $51.43_{\pm 1.24}$ &
  $64.77$ &
  $70.29_{\pm 2.73}$ &
  $54.33_{\pm 1.98}$ &
  $71.16_{\pm 8.98}$ &
  $65.26$ \\ \hline
 &
  GraphGPT-Mistral &
  - &
  - &
  - &
  - &
  - &
  $69.23_{\pm 1.49}$ &
  $51.35_{\pm 2.14}$ &
  $69.70_{\pm 7.99}$ &
  $63.43$ \\
 &
  GraphGPT-Qwen &
  - &
  - &
  - &
  - &
  - &
  $68.98_{\pm 2.52}$ &
  $52.35_{\pm 1.23}$ &
  $70.25_{\pm 3.16}$ &
  $63.86$ \\
 &
  LLaGA-Mistral &
  $61.10_{\pm 9.95}$ &
  $59.75_{\pm 0.42}$ &
  $65.30_{\pm 4.98}$ &
  $51.15_{\pm 3.29}$ &
  $59.33$ &
  $70.61_{\pm 3.03}$ &
  $54.38_{\pm 2.94}$ &
  $70.26_{\pm 7.38}$ &
  $65.08$ \\
 &
  LLaGA-Qwen &
  $68.52_{\pm 4.19}$ &
  $62.06_{\pm 2.59}$ &
  $66.36_{\pm 6.33}$ &
  $50.77_{\pm 2.46}$ &
  $61.93$ &
  $70.39_{\pm 2.67}$ &
  $53.16_{\pm 2.62}$ &
  $68.95_{\pm 7.75}$ &
  $64.17$ \\
 &
  TEA-GLM-Mistral &
  $71.68_{\pm 2.11}$ &
  $62.36_{\pm 3.98}$ &
  $70.31_{\pm 4.83}$ &
  $52.71_{\pm 1.92}$ &
  $64.27$ &
  $64.92_{\pm 6.11}$ &
  $53.79_{\pm 6.27}$ &
  $69.71_{\pm 5.29}$ &
  $62.81$ \\
\multirow{-6}{*}{\textbf{\begin{tabular}[c]{@{}c@{}}Graph-\\ LLM\\ Method\end{tabular}}} &
  TEA-GLM-Qwen &
  $68.26_{\pm 2.68}$ &
  $61.32_{\pm 5.33}$ &
  $68.67_{\pm 7.38}$ &
  $51.56_{\pm 5.66}$ &
  $62.45$ &
  $68.83_{\pm 8.24}$ &
  $50.79_{\pm 5.69}$ &
  $68.43_{\pm 7.20}$ &
  $62.68$ \\ \hline
 &
  GLIP-Mistral &
  $73.29_{\pm 5.33}$ &
  \cellcolor[HTML]{FFF2E9}$62.98_{\pm 2.04}$ &
  \cellcolor[HTML]{FFDEC7}$72.45_{\pm 4.91}$ &
  \cellcolor[HTML]{FFF2E9}$53.17_{\pm 3.20}$ &
  \cellcolor[HTML]{FFF2E9}$65.47$ &
  \cellcolor[HTML]{FFDEC7}$72.33_{\pm 2.54}$ &
  \cellcolor[HTML]{FFF2E9}$54.61_{\pm 4.28}$ &
  \cellcolor[HTML]{FFDEC7}$74.98_{\pm 6.49}$ &
  \cellcolor[HTML]{FFDEC7}$67.31$ \\
\multirow{-2}{*}{\textbf{Ours}} &
  GLIP-Qwen &
  \cellcolor[HTML]{FFDEC7}$74.71_{\pm 3.52}$ &
  \cellcolor[HTML]{FFDEC7}$64.99_{\pm 1.29}$ &
  \cellcolor[HTML]{FFF2E9}$71.54_{\pm 5.61}$ &
  \cellcolor[HTML]{FFDEC7}$53.49_{\pm 2.15}$ &
  \cellcolor[HTML]{FFDEC7}$66.18$ &
  \cellcolor[HTML]{FFF2E9}$71.53_{\pm 4.87}$ &
  \cellcolor[HTML]{FFDEC7}$54.68_{\pm 1.91}$ &
  $73.49_{\pm 6.81}$ &
  \cellcolor[HTML]{FFF2E9}$66.57$ \\ \hline
\end{tabular}
\end{table*}

\begin{table*}[htbp!]
\centering
\fontsize{8.5}{13}\selectfont  
\setlength{\tabcolsep}{4pt} 
\caption{\textbf{Performance comparison under 3-shot settings  with F1 score (\%) reported.} 
The \colorbox[HTML]{FFDEC7}{\textbf{best}} and \colorbox[HTML]{FFF2E9}{second-best} results are highlighted. Results are averaged over five runs. }
\label{tb-3-F1}
\begin{tabular}{cc|ccccc|cccc}
\hline
\multicolumn{2}{c|}{} &
  \multicolumn{5}{c|}{\textbf{Feature Graphs}} &
  \multicolumn{4}{c}{\textbf{Text-Attributed Graphs}} \\
\multicolumn{2}{c|}{\multirow{-2}{*}{\textbf{\begin{tabular}[c]{@{}c@{}}F1 Score\\ 3-Shot\end{tabular}}}} &
  \textbf{MUTAG} &
  \textbf{PROTEINS} &
  \textbf{REDDIT} &
  \textbf{IMDB} &
  \textbf{Average} &
  \textbf{BBBP} &
  \textbf{BACE} &
  \textbf{E-Commerce} &
  \textbf{Average} \\ \hline
 &
  GraphMAE &
  $60.60_{\pm 9.93}$ &
  $39.23_{\pm 3.76}$ &
  $58.96_{\pm 8.04}$ &
  $44.31_{\pm 7.60}$ &
  $50.78$ &
  $44.65_{\pm 2.40}$ &
  $35.29_{\pm 0.17}$ &
  \cellcolor[HTML]{FFF2E9}$73.80_{\pm 3.19}$ &
  $51.25$ \\
 &
  GraphCL &
  $71.77_{\pm 2.78}$ &
  $58.73_{\pm 3.80}$ &
  $70.99_{\pm 3.40}$ &
  $46.65_{\pm 6.93}$ &
  $62.04$ &
  $54.18_{\pm 6.68}$ &
  $53.53_{\pm 3.24}$ &
  $68.53_{\pm 9.31}$ &
  $58.75$ \\
 &
  DGI &
  $66.43_{\pm 9.84}$ &
  $52.19_{\pm 8.60}$ &
  $67.19_{\pm 7.55}$ &
  $42.14_{\pm 4.05}$ &
  $56.99$ &
  $44.43_{\pm 2.24}$ &
  $40.44_{\pm 7.58}$ &
  $71.96_{\pm 6.05}$ &
  $52.28$ \\
\multirow{-4}{*}{\textbf{\begin{tabular}[c]{@{}c@{}}Graph\\ Pretrain\\ Method\end{tabular}}} &
  SimGRACE &
  $69.04_{\pm 8.35}$ &
  $52.39_{\pm 9.04}$ &
  $70.82_{\pm 5.68}$ &
  $39.28_{\pm 6.25}$ &
  $57.88$ &
  $51.20_{\pm 7.21}$ &
  $35.20_{\pm 2.17}$ &
  $72.89_{\pm 9.15}$ &
  $53.10$ \\ \hline
 &
  GraphGPT-Mistral &
  - &
  - &
  - &
  - &
  - &
  $50.30_{\pm 6.01}$ &
  $43.68_{\pm 9.31}$ &
  $68.53_{\pm 9.31}$ &
  $54.17$ \\
 &
  GraphGPT-Qwen &
  - &
  - &
  - &
  - &
  - &
  $51.31_{\pm 2.40}$ &
  $42.52_{\pm 8.12}$ &
  $69.31_{\pm 6.28}$ &
  $54.38$ \\
 &
  LLaGA-Mistral &
  $41.71_{\pm 9.81}$ &
  $39.16_{\pm 2.45}$ &
  $63.49_{\pm 5.62}$ &
  $38.55_{\pm 10.26}$ &
  $45.73$ &
  $46.40_{\pm 5.35}$ &
  $35.22_{\pm 3.63}$ &
  $72.90_{\pm 9.14}$ &
  $51.51$ \\
 &
  LLaGA-Qwen &
  $52.87_{\pm 9.49}$ &
  $55.97_{\pm 2.96}$ &
  $65.92_{\pm 6.90}$ &
  $43.04_{\pm 8.73}$ &
  $54.45$ &
  $46.76_{\pm 7.58}$ &
  $35.78_{\pm 1.29}$ &
  $64.04_{\pm 20.45}$ &
  $48.86$ \\
 &
  TEA-GLM-Mistral &
  $69.35_{\pm 1.53}$ &
  $57.83_{\pm 8.26}$ &
  $70.19_{\pm 4.81}$ &
  $53.04_{\pm 3.45}$ &
  $62.60$ &
  $52.66_{\pm 7.21}$ &
  $48.82_{\pm 9.68}$ &
  $69.23_{\pm 8.20}$ &
  $56.90$ \\
\multirow{-6}{*}{\textbf{\begin{tabular}[c]{@{}c@{}}Graph-\\ LLM\\ Method\end{tabular}}} &
  TEA-GLM-Qwen &
  $58.44_{\pm 9.27}$ &
  $58.25_{\pm 6.45}$ &
  $67.53_{\pm 8.72}$ &
  $51.25_{\pm 7.93}$ &
  $58.87$ &
  $49.31_{\pm 7.22}$ &
  $49.17_{\pm 6.66}$ &
  $66.75_{\pm 8.43}$ &
  $55.08$ \\ \hline
 &
  GLIP-Mistral &
  \cellcolor[HTML]{FFF2E9}$71.33_{\pm 6.35}$ &
  \cellcolor[HTML]{FFF2E9}$59.22_{\pm 6.06}$ &
  \cellcolor[HTML]{FFDEC7}$72.18_{\pm 2.27}$ &
  \cellcolor[HTML]{FFF2E9}$53.09_{\pm 4.51}$ &
  \cellcolor[HTML]{FFF2E9}$63.96$ &
  \cellcolor[HTML]{FFDEC7}$55.26_{\pm 7.86}$ &
  \cellcolor[HTML]{FFF2E9}$54.18_{\pm 4.43}$ &
  \cellcolor[HTML]{FFDEC7}$74.68_{\pm 6.62}$ &
  \cellcolor[HTML]{FFDEC7}$61.37$ \\
\multirow{-2}{*}{\textbf{Ours}} &
  GLIP-Qwen &
  \cellcolor[HTML]{FFDEC7}$72.75_{\pm 3.49}$ &
  \cellcolor[HTML]{FFDEC7}$60.90_{\pm 3.15}$ &
  \cellcolor[HTML]{FFF2E9}$71.03_{\pm 5.81}$ &
  \cellcolor[HTML]{FFDEC7}$52.79_{\pm 4.83}$ &
  \cellcolor[HTML]{FFDEC7}$64.37$ &
  \cellcolor[HTML]{FFF2E9}$54.02_{\pm 4.79}$ &
  \cellcolor[HTML]{FFDEC7}$50.43_{\pm 5.01}$ &
  $73.28_{\pm 6.93}$ &
  \cellcolor[HTML]{FFF2E9}$59.24$ \\ \hline
\end{tabular}
\end{table*}

\end{document}

%% file: symbol_def.tex
\newtheorem{problem}{Problem}
\newtheorem{theorem}{Theorem}[section]

\newenvironment{fminipage}%
{\begin{Sbox}\begin{minipage}}%
		{\end{minipage}\end{Sbox}\fbox{\TheSbox}}

\newcommand{\removelatexerror}{\let\@latex@error\@gobble}

\newcommand\LL{\bm{\mathit{L}}}

\newcommand\XX{\boldsymbol{\mathit{X}}}

\newcommand\zz{\boldsymbol{\mathit{z}}}

\newcommand\xx{\boldsymbol{\mathit{x}}}

\newcommand\uu{\boldsymbol{\mathit{u}}}

\renewcommand\AA{\boldsymbol{\mathit{A}}}
\newcommand\BB{\boldsymbol{\mathit{B}}}

\newcommand\JJ{\boldsymbol{\mathit{J}}}
\newcommand\DD{\boldsymbol{\mathit{D}}}

\newcommand\ZZ{\boldsymbol{\mathit{Z}}}

\newcommand\II{\boldsymbol{\mathit{I}}}

\DontPrintSemicolon
\SetKw{KwAnd}{and}
\SetFuncSty{textsc}
\SetKwInOut{Input}{Input\ \ \ \ }
\SetKwInOut{Output}{Output}